\documentclass{article}

\usepackage[preprint]{neurips_2026}
\usepackage{booktabs}
\usepackage{multirow}

\usepackage[utf8]{inputenc}
\usepackage[T1]{fontenc}
\usepackage[pagebackref=true,breaklinks=true,colorlinks,urlcolor={blue!70!black},linkcolor={red!90!black},citecolor={blue!90!black},bookmarks=false]{hyperref}
\usepackage{url}
\usepackage{amsfonts}
\usepackage{amssymb}
\usepackage{nicefrac}
\usepackage{microtype}
\usepackage{xcolor}
\usepackage{graphicx}
\usepackage{amsthm}
\usepackage{amsmath}
\usepackage{colortbl}

\usepackage{mathtools}
\usepackage{wrapfig}
\usepackage{listings}
\usepackage{tabulary}
\usepackage{etoolbox}
\usepackage{enumitem}
\usepackage{xspace}
\usepackage{algorithmic}
\usepackage{algorithm}
\usepackage{float}
\usepackage{subfig}
\usepackage{fontawesome5}
\usepackage{minitoc}
\newtheorem{Theorem}{Theorem}[section]
\newtheorem{Definition}[Theorem]{Definition}
\newtheorem{Proposition}[Theorem]{Proposition}
\newtheorem{Assumption}[Theorem]{Assumption}

\newtheorem{Corollary}[Theorem]{Corollary}

\theoremstyle{definition}
\newtheorem{Remark}[Theorem]{Remark}

\newcommand{\method}{MARS\xspace}

\title{\method: \underline{M}argin-\underline{A}dversarial \underline{R}isk-controlled \underline{S}topping \\ for Parallel LLM Test-time Scaling}

\author{
  Wenbo Chen\footnotemark[1]$^{\;\,,\spadesuit}$ \quad
  Puheng Li\footnotemark[1]$^{\;\,,\bigstar}$ \quad
  Mengyang Liu\thanks{Equal contribution. Correspondence to \texttt{wenbochen111@outlook.com}.}$^{\;\,,\spadesuit}$ \quad
  Weijie Su$^{\Diamond}$ \quad
  Tianpei Xie$^{\spadesuit}$ \\[4pt]
  $^{\spadesuit}$Amazon\thanks{Work conducted outside of the authors' roles at Amazon.} \qquad $^{\bigstar}$Stanford University \qquad $^{\Diamond}$University of Pennsylvania
}

\usepackage{amsmath,amsfonts,bm}

\def\eqref#1{equation~\ref{#1}}

\def\1{\bm{1}}

\DeclareMathAlphabet{\mathsfit}{\encodingdefault}{\sfdefault}{m}{sl}
\SetMathAlphabet{\mathsfit}{bold}{\encodingdefault}{\sfdefault}{bx}{n}

\begin{document}

\maketitle
\enlargethispage{0.4in}
\vspace{-10pt}

\begin{center}
{\color{black}\faGithub}\ \textbf{Code:}
  \href{https://github.com/Wenbo11/MARS}{\texttt{https://github.com/Wenbo11/MARS}}
\\
\raisebox{-0.1em}{\includegraphics[height=1.1em]{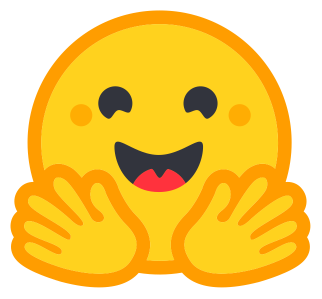}}\ \textbf{Thinking Traces:}
  \href{https://huggingface.co/datasets/wenbochen111/MARS}{\texttt{https://huggingface.co/datasets/wenbochen111/MARS}}
\end{center}
\vspace{-4pt}

\begin{abstract}
Parallel test-time scaling samples many reasoning traces and majority-votes their answers, improving LLM accuracy but requiring traces to run to completion, incurring substantial computational overhead. We observe that probing partial traces at intermediate checkpoints can extract current answers without disrupting generation, revealing an evolving aggregate vote. Based on this observation, we introduce \emph{\method}, a margin-adversarial stopping rule that estimates which active traces are likely to change their answers and stops once the leader remains safe under a conservative bound on future vote movement. The rule separates two sources of uncertainty. It learns the trace-level switch probabilities that determine how much of the current margin is likely to be retained, while handling the harder question of where switching traces land through an adversarial bound calibrated from warmup traces. With true switch probabilities, \method guarantees with high probability that the early-stopped answer matches the full-budget vote. In practice, a five-feature logistic model closely matches oracle switching behavior. Across three reasoning models and three competition-math benchmarks,~\method saves 25--47\% of self-consistency tokens and 14--29\% on top of DeepConf Online, a strong confidence-weighted baseline that already filters and truncates weak traces, while matching the accuracy of the corresponding full-budget baselines.

\end{abstract}

\vspace{-10pt}
\begin{figure}[H]
\centering
\includegraphics[width=0.9\textwidth]{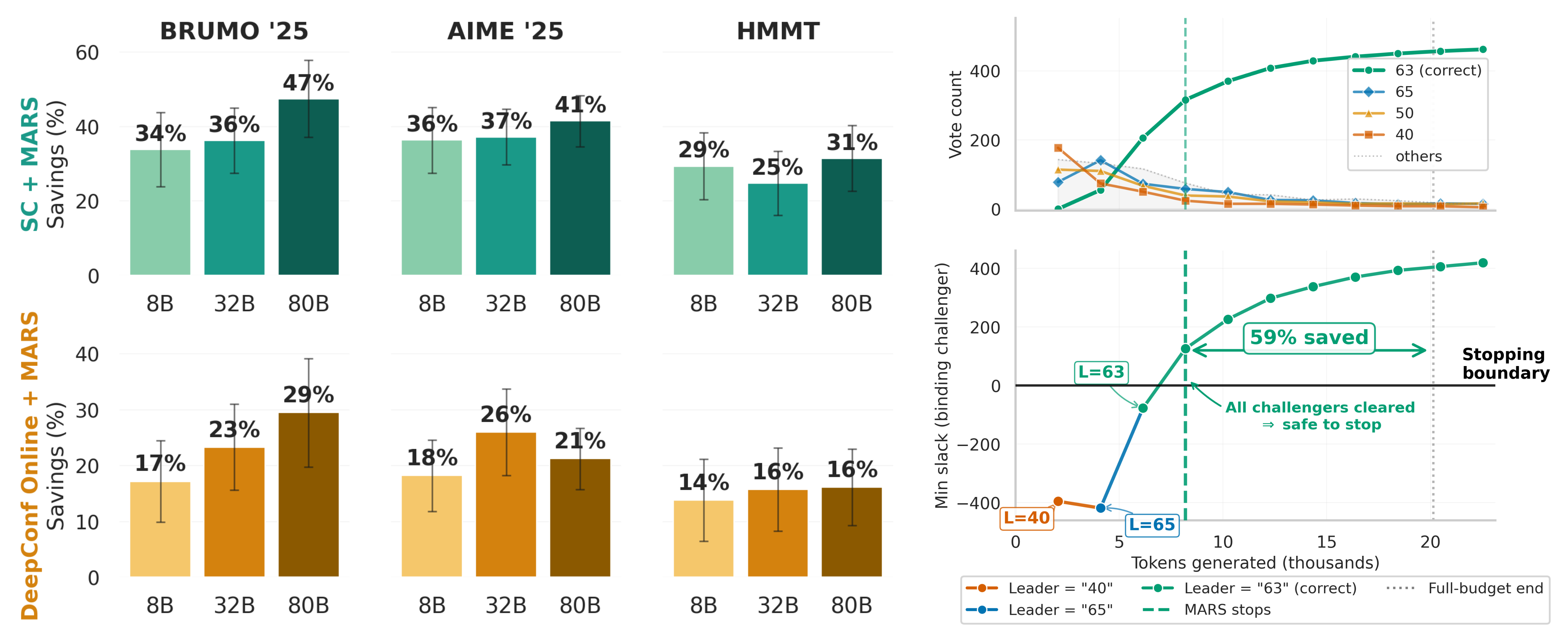}
\vspace{-6pt}
\caption{\textbf{Left:} Token savings achieved by \method across three models (DeepSeek-R1-8B, Qwen3-32B, Qwen3-next-80B) and three competition math benchmarks, under both self-consistency (SC) and DeepConf Online \citep{fu2025deep} voting, while matching the accuracy. Error bars show 95\% CI across questions. \textbf{Right:} \method in action on HMMT Q22 (DeepSeek-R1-8B). Top: vote share evolution across probes. Bottom: minimum slack (binding challenger margin minus adversarial threshold); \method stops when slack crosses zero, keeping the correct answer and saving 59\% of tokens.}
\label{fig:overview}
\end{figure}

\section{Introduction}
\label{sec:intro}

Parallel test-time scaling, sampling many reasoning traces and aggregating their answers by majority vote, is among the most reliable ways to improve LLM accuracy on hard problems \citep{wang2022self,brown2024large,snell2024scaling}. But reliability comes at a cost: a 512-trace run on a single competition-math problem generates millions of tokens. On most questions, the winning answer is decided well before all traces finish. The remaining computation is pure waste.

A key enabler is that modern reasoning models expose long thinking traces \citep{qwen3report}, and recent probing work shows that intermediate answers can be elicited from partial traces during parallel thinking \citep{zheng2026parallel}. This makes \emph{probing} possible: extracting intermediate answers from partial traces mid-generation without disrupting the ongoing reasoning process. Probing transforms parallel decoding from opaque to observable. At each checkpoint we know who is voting for what, how the plurality has shifted, and how many traces remain undecided. Despite its power, probing for early stopping remains largely unexplored, with only one concurrent method \citep{zheng2026parallel} exploiting it in the parallel-vote setting.

\begin{wrapfigure}{r}{0.48\textwidth}
\vspace{-10pt}
\centering
\includegraphics[width=0.48\textwidth]{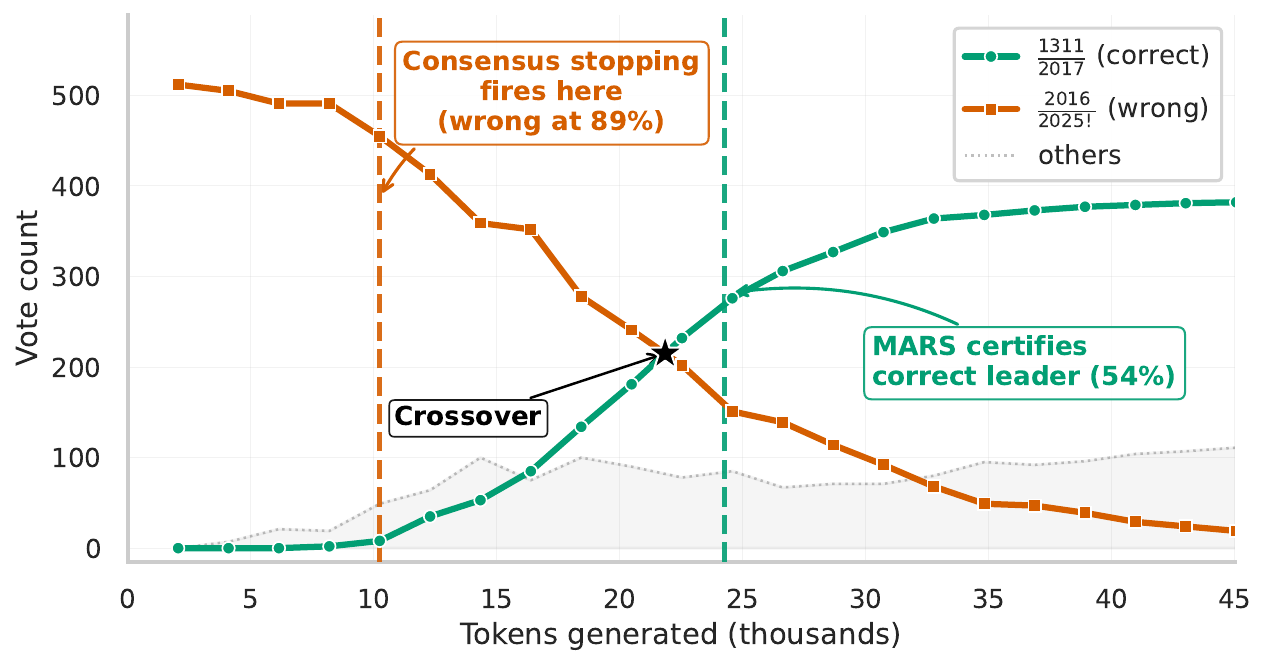}
\caption{Consensus stopping fails on hard questions. On HMMT Q6 (DeepSeek-8B), 89\% of traces initially vote for the wrong answer and the wrong answer leads till the middle of generation. Consensus stopping fires early, locking in the error. \method waits until the correct answer overtakes and certifies it.}
\label{fig:pp_failure}
\vspace{-10pt}
\end{wrapfigure}

But observability is not the same as a stopping criterion. Given a live view of the vote, when is it safe to stop? The naive answer, stop when the majority answer has been stable for several checkpoints, is a consensus heuristic. And consensus heuristics fail precisely when they matter most: on hard problems. Fig.~\ref{fig:pp_failure} illustrates a typical failure mode. At an early checkpoint the majority of traces converge on an \emph{incorrect} preliminary answer. A consensus-based rule sees stability and stops, locking in the wrong answer. With more computation, a smaller group of traces would have switched to the correct answer and eventually overtaken the early leader. The concurrent Parallel-Probe method \citep{zheng2026parallel} instantiates exactly this failure: on HMMT with DeepSeek-8B, consensus stopping drops accuracy from 70\% to 35\%, cutting performance in half while ostensibly ``saving tokens.''

The failure is not accidental. Hard questions are precisely those where the correct answer emerges late, after many traces revise their initial reasoning. Any rule that equates \emph{stability} with \emph{safety} will systematically stop too early on the questions where early stopping is most dangerous. This reveals the core research question:

\begin{center}
\emph{Can we stop parallel generation early, provably preserving the final vote outcome,\\while still saving a substantial fraction of tokens?}
\end{center}

The key insight is that the right object is not whether the vote has changed, but whether it \emph{can} still change. A lead of 300 votes is safe if the remaining uncertain traces cannot plausibly coordinate 150 switches toward a single challenger. The same lead is fragile if most leader-supporting traces are still mid-reasoning and likely to revise. Safety is a \emph{margin} question: the current lead, minus the worst-case damage from future switches, must remain positive for every challenger simultaneously.

We introduce \method (\textbf{M}argin-\textbf{A}dversarial \textbf{R}isk-controlled \textbf{S}topping), a principled early-stopping rule for parallel LLM decoding. At each checkpoint, \method (i) probes every active trace for its current answer, (ii) estimates the probability that each trace will switch before the full-budget endpoint, and (iii) computes the maximum margin loss that future switches could adversarially cause against each challenger. Generation stops only when every challenger is certified: the leader's margin exceeds the adversarial switch damage. With true switch probabilities, this yields a high-probability guarantee that the early-stopped answer matches the full-budget vote.

The modeling problem separates cleanly into two parts. \emph{Whether} a trace will switch is learnable from its own history (checkpoint position, probe confidence, answer-flip count, stability streak) using a lightweight logistic model trained on a handful of warmup traces per question. \emph{Where} a switching trace will land is fundamentally harder: the destination is an open-ended answer string that depends on future reasoning. Rather than learning a full destination distribution, \method treats destinations adversarially and calibrates a per-question contraction parameter from warmup evidence to capture how much uncertain switch mass actually behaves adversarially in practice.

On AIME 2025, HMMT, and BRUMO 2025, across DeepSeek-R1-8B, Qwen3-32B, and Qwen3-next, \method saves 25--47\% of self-consistency tokens and an additional 14--29\% on top of DeepConf Online \citep{fu2025deep}, while matching the full-budget baseline across all 18 settings (within 0.6 percentage points where accuracy changes, and improving it by up to 0.8 points in several settings; Fig.~\ref{fig:overview}). In direct comparison, Parallel-Probe \citep{zheng2026parallel} achieves comparable savings only by sacrificing 9--35 percentage points of accuracy; when tuned to preserve accuracy, its savings collapse to ${\leq}4\%$. The results confirm that principled margin certification, not consensus detection, is the right foundation for safe early stopping in parallel reasoning.

\subsection{Problem setup}
\label{sec:setup}

We formalize the stopping problem for a single prompt. The system launches $N$ reasoning traces in parallel. Each trace would ordinarily run to a full-budget endpoint $T$. We observe the traces at checkpoints $t_1<\cdots<t_P=T$.

\paragraph{Trace states.} At checkpoint $t$, the base sampler reports a status for each trace:
\[
s_j(t)\in\{\mathrm{running},\mathrm{finished},\mathrm{discarded}\}.
\]
A \emph{running} trace is still generating and may change its future answer. A \emph{finished} trace has produced a terminal answer, so its contribution is fixed. A \emph{discarded} trace has been deprecated by the base pipeline (e.g., filtered for low confidence); it contributes zero weight. The active set is $\mathcal{A}_t=\{j:s_j(t)=\mathrm{running}\}$, and $\mathcal{F}_t$ denotes all information available at checkpoint $t$.

\paragraph{Probing.} For a running trace, we extract its current answer by appending a short stop-thinking instruction to the partial generation, collecting the answer, and discarding the probe suffix. The original trace continues unmodified from the checkpoint. This matches the behavior of thinking-mode models that can produce an answer when interrupted \citep{qwen3report}. We denote the extracted answer $a_j(t)$; for finished traces this is the terminal answer, and for discarded traces $a_j(t)=\varnothing$.

\paragraph{Voting.} Each trace carries an effective nonneg weight $w_j\in[0,w_{\max}]$, with $w_j=0$ for discarded traces. The vote for answer $a\neq\varnothing$ at checkpoint $t$ is
\[
V_a(t)=\sum_{j=1}^N w_j\,\mathbf{1}\{a_j(t)=a\},
\]
and the current leader is $L(t)=\arg\max_a V_a(t)$ with deterministic tie-breaking. Uniform weights recover self-consistency; confidence-derived weights recover schemes like DeepConf \citep{fu2025deep}.

\paragraph{Stopping.} We seek a stopping time $\tau\leq T$ that saves generation cost while preserving the full-budget answer with high probability. For a user-specified risk level $\delta\in(0,1)$:
\[
P\bigl(L(\tau)\neq L(T)\bigr)\leq\delta.
\]
This is a guarantee relative to the full-budget vote, not directly to the unknown ground truth. Benchmark accuracy is evaluated separately in \S\ref{sec:experiments}.

\section{Margin-Adversarial Risk-controlled Stopping (\method)}
\label{sec:method}

\begin{figure}[t]
\centering
\includegraphics[width=\textwidth]{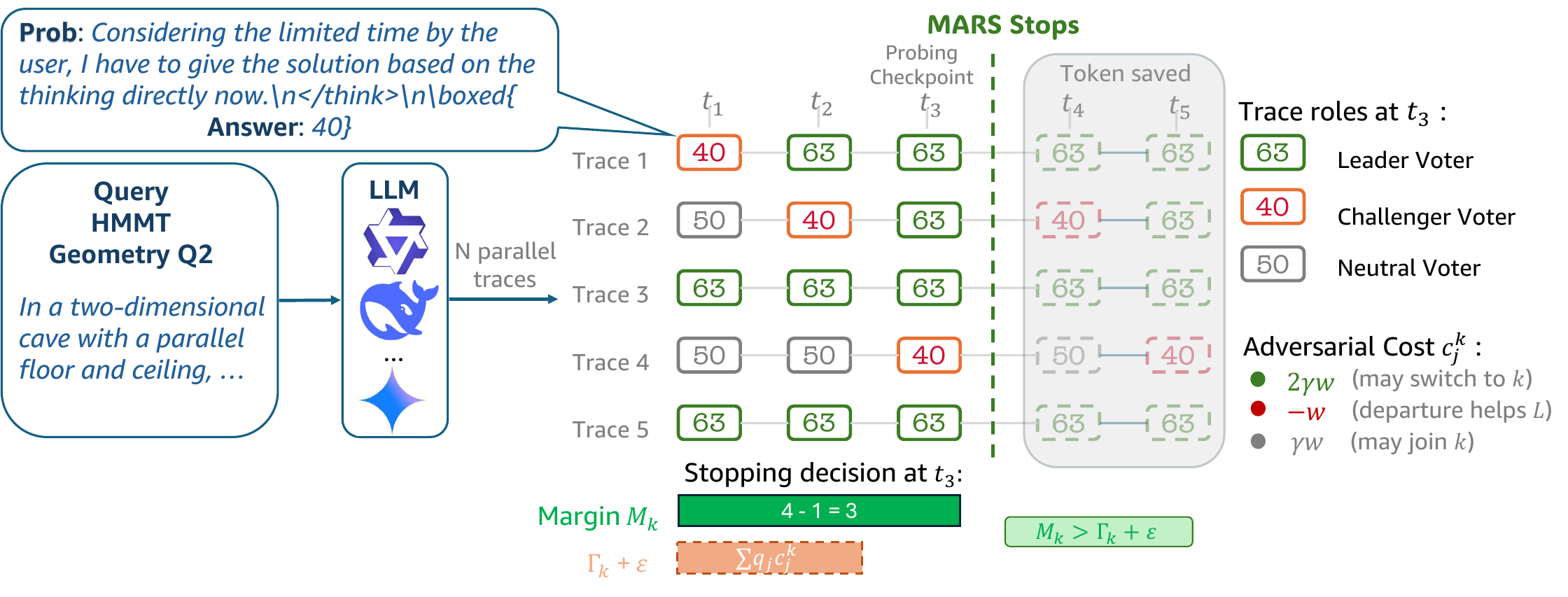}
\caption{Illustration of \method on a single question. At each checkpoint, active traces are probed for current answers. The margin test checks whether the leader's advantage over every challenger exceeds the expected adversarial switch damage. Generation stops only when all challengers are certified.}
\label{fig:pipeline}
\end{figure}


The key idea behind \method is simple: the current leader is safe to output when its margin over every challenger exceeds the worst-case damage that future answer switches could inflict (Fig.~\ref{fig:pipeline}). We first define the margin and present the stopping rule with its safety guarantee (\S\ref{sec:margin_criterion}--\S\ref{sec:stopping_rule}), introduce a calibrated relaxation (\S\ref{sec:gamma}), and describe the practical algorithm (\S\ref{sec:practical}).

\subsection{The margin as stopping criterion}
\label{sec:margin_criterion}

Consider 512 parallel traces, each producing an intermediate answer at a checkpoint. Suppose the current leader has 300 votes and the nearest challenger has 150. Is it safe to stop? The answer depends on how many traces might still change their minds. If only 20 traces are still mid-reasoning, the lead is unassailable. If 400 traces are volatile, the lead is fragile. The relevant quantity is not the margin alone, but the margin relative to remaining uncertainty.

We formalize this intuition. At checkpoint $t$, define the margin of the leader $L=L(t)$ over each challenger $k$ as
\[
M_k(t)=V_L(t)-V_k(t).
\]
If $M_k(T)>0$ for every challenger $k$ at the full-budget endpoint $T$, then the checkpoint leader survives. Conversely, the full-budget winner differs only if some challenger achieves $M_k(T)\leq 0$. The stopping problem reduces to certifying, for every challenger simultaneously, that its margin cannot be closed by future vote changes.

The checked challenger set $\mathcal{K}(t)$ contains every nonleader answer with positive current vote, plus a synthetic unseen challenger $\bot$ with $V_{\bot}(t)=0$. The $\bot$ challenger covers answer strings that are absent at checkpoint $t$ but could appear later: it pessimistically allows all undecided mass to coordinate on a novel answer.

\subsection{The stopping rule}
\label{sec:stopping_rule}

Between checkpoint $t$ and the full budget $T$, some active traces will change their answers. Let $X_j = \mathbf{1}\{a_j(T)\neq a_j(t)\}$ indicate whether trace $j$ switches, and let $q_j(t) = P(X_j=1\mid\mathcal{F}_t)$ be its conditional switch probability, estimated from observable features of the trace (position, confidence, flip history, streak length). Finished and discarded traces have $q_j=0$.

When trace $j$ switches, the damage it inflicts on the margin against challenger $k$ depends on its current vote. We bound this damage adversarially by considering the worst-case destination:

\begin{Definition}[Adversarial switch cost]\label{def:cost}
\begin{equation}\label{eq:cost}
c_j^k =
\begin{cases}
2w_j, & a_j(t)=L,\\
-w_j, & a_j(t)=k,\\
w_j, & a_j(t)\notin\{L,k\}.
\end{cases}
\end{equation}
\end{Definition}

This is the worst case scenario, since: 1)~a leader-voter switching away removes $w_j$ from the leader and at worst adds $w_j$ to challenger $k$, for total damage $2w_j$; 2)~a challenger-voter leaving \emph{helps} the leader regardless of destination, giving $-w_j$; 3)~a neutral voter at worst joins $k$, costing $w_j$. The intuition is simple: a challenger with volatile supporters is less threatening than its vote count suggests.

The expected adversarial damage against challenger $k$ is
\begin{equation}\label{eq:threshold}
\Gamma_k(t)=\sum_{j\in\mathcal{A}_t} q_j\, c_j^k.
\end{equation}
\method stops when the current margin exceeds this expected damage plus a concentration correction for every challenger:
\begin{equation}\label{eq:stopping}
M_k(t)\geq \Gamma_k(t)+\epsilon(N,\delta)
\qquad \text{for all } k\in\mathcal{K}(t).
\end{equation}

\begin{Assumption}[Bounded weights]\label{ass:bounded}
For all traces, $0\leq w_j\leq w_{\max}$.
\end{Assumption}

\begin{Assumption}[Conditional independence]\label{ass:indep}
Conditional on $\mathcal{F}_t$, the switch indicators $\{X_j\}_{j\in\mathcal{A}_t}$ are independent.
\end{Assumption}

\begin{Theorem}[Per-challenger safety]\label{thm:safety}
Under Assumptions~\ref{ass:bounded}--\ref{ass:indep}, using true switch probabilities $q_j(t)$,
\begin{equation}\label{eq:safety}
P\!\left(M_k(T)\leq 0\mid \mathcal{F}_t\right)
\leq
\exp\!\left(
-\frac{\left(M_k(t)-\Gamma_k(t)\right)_+^2}{2w_{\max}^2N_{\mathrm{active}}}
\right),
\end{equation}
where $N_{\mathrm{active}}=\max\{1,|\mathcal{A}_t|\}$.
\end{Theorem}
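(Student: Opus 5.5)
Starting from the definition of the margin, I bound the margin loss between $t$ and $T$ by a sum of bounded independent random variables and then apply Hoeffding's inequality. For a fixed challenger $k$, write $M_k(T)=M_k(t)-D_k$, where $D_k$ is the realized change in $V_L-V_k$ caused by traces that change their answers. Only traces in $\mathcal{A}_t$ can change, since finished and discarded traces are frozen, and a trace with $X_j=0$ contributes nothing. The first step is a pathwise domination: for every $j\in\mathcal{A}_t$ and every possible destination $a_j(T)$,
\[
\text{(contribution of trace $j$ to $D_k$)}\leq c_j^k X_j .
\]
This follows from the case analysis behind Definition~\ref{def:cost}. A leader voter that switches loses at most $w_j$ from $V_L$ and adds at most $w_j$ to $V_k$, so its damage is at most $2w_j$. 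A challenger voter that switches removes $w_j$ from $V_k$ and can add at most $w_j$ to $V_L$, so its damage is at most $-w_j$. A neutral voter that switches can add at most $w_j$ to $V_k$. Hence $D_k\leq S_k:=\sum_{j\in\mathcal{A}_t} c_j^k X_j$, and
\[
\{M_k(T)\leq 0\}\subseteq\{S_k\geq M_k(t)\}.
\]

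One subtlety is that the leader $L$ and the challenger $k$ are defined at time $t$ and held fixed. The event $M_k(T)\le 0$ should be read with these fixed labels $L=L(t)$ and $k$. For $k=\bot$, the synthetic challenger, $V_\bot(t)=0$, and the bound applies to any specific new answer string. The statement is per challenger, so no union bound is needed here.

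Next, conditional on $\mathcal{F}_t$, the quantities $c_j^k$ and $q_j$ are fixed, so $\mathbb{E}[S_k\mid\mathcal{F}_t]=\sum_j q_j c_j^k=\Gamma_k(t)$. By Assumption~\ref{ass:indep} the summands $c_j^kX_j$ are conditionally independent. Each lies in an interval of length $|c_j^k|\leq 2w_{\max}$ by Assumption~\ref{ass:bounded}: $[0,c_j^k]$ when $c_j^k\ge 0$, and $[-w_j,0]$ otherwise. If $M_k(t)\le\Gamma_k(t)$, the claimed bound is the trivial bound $1$. Otherwise, Hoeffding's inequality with $u=M_k(t)-\Gamma_k(t)>0$ gives
\[
P(S_k-\Gamma_k\geq u\mid\mathcal{F}_t)\leq\exp\!\Big(-\frac{2u^2}{\sum_j (c_j^k)^2}\Big)\leq \exp\!\Big(-\frac{2u^2}{4w_{\max}^2|\mathcal{A}_t|}\Big),
\]
which is exactly the claimed exponent $u^2/(2w_{\max}^2N_{\mathrm{active}})$. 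Replacing $|\mathcal{A}_t|$ by $N_{\mathrm{active}}=\max\{1,|\mathcal{A}_t|\}$ only weakens the bound. It also covers the empty active set: in that case $S_k=0$ and $\Gamma_k(t)=0$, so the event $S_k\ge M_k(t)$ has probability zero when $M_k(t)>0$, and the bound is trivial when $M_k(t)\le 0$.

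I expect the main obstacle to be the pathwise domination step, not the concentration step. It requires care that the adversarial cost bounds the damage for every destination simultaneously, including a switch into $L$ or into $k$ itself. It also requires that the comparison is made with the leader and challenger fixed at time $t$, so that $M_k(T)$ is well defined even if new answers appear later. The independence assumption is used only for the $X_j$; the destinations may be arbitrarily dependent, because the domination holds deterministically. Once this is settled, the Hoeffding step is routine.
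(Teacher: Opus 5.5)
Your proof is correct and shares the paper's overall strategy (adversarial per-trace cost, then Hoeffding). It does, however, concentrate a different random variable than the paper's appendix proof does for observed challengers. There, Hoeffding is applied to the realized per-trace margin decreases $Y_j$, which depend on the destinations $a_j(T)$, and Definition~\ref{def:cost} is used only to bound their mean by $\Gamma_k(t)$. You instead establish the pathwise domination $D_k\le S_k=\sum_{j\in\mathcal{A}_t}c_j^kX_j$ and apply Hoeffding to $S_k$, whose conditional mean is exactly $\Gamma_k(t)$. This is the route the main text describes, and the one the paper itself uses for $\bot$, where $\widetilde Y_j=c_j^{\bot}X_j$.

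Your version is the more rigorous one under the stated hypotheses. The summands $c_j^kX_j$ are functions of the $X_j$ with $\mathcal{F}_t$-measurable coefficients, so Assumption~\ref{ass:indep} gives their conditional independence directly. The paper's claim that the $Y_j$ are independent by Assumption~\ref{ass:indep} tacitly needs independent destinations as well: independent switch indicators with a shared latent landing answer already make the $Y_j$ dependent. Your route also gives slightly smaller ranges ($w_j$ rather than $2w_j$ for non-leader voters), though both end at $4w_{\max}^2N_{\mathrm{active}}$. In exchange, if one assumed the pairs $(X_j,a_j(T))$ conditionally independent, the paper's route would yield a bound centered at the true expected margin loss rather than at $\Gamma_k(t)$. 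That is the quantity the $\gamma<1$ contraction tries to approximate.

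One point to tighten. For $\bot$ you state the bound for each \emph{specific} unseen answer, but the union bound in Corollary~\ref{cor:stopping} needs a single event covering all unseen answers at once. Your domination already provides it: $c_j^b=c_j^{\bot}$ for every unseen $b$, which gives $\bigcup_b\{M_b(T)\le 0\}\subseteq\{\sum_j c_j^{\bot}X_j\ge V_L(t)\}$. That is precisely the paper's event $M_{\bot}(T)\le 0$.
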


\begin{Corollary}[Risk-controlled stop]\label{cor:stopping}
Set
\begin{equation}\label{eq:correction}
\epsilon(N,\delta)=w_{\max}\sqrt{2N_{\mathrm{active}}\log\frac{N}{\delta}}.
\end{equation}
If Eq.~\ref{eq:stopping} holds for every $k\in\mathcal{K}(t)$, then $P\bigl(L(T)\neq L(t)\mid \mathcal{F}_t\bigr)\leq \delta$.
\end{Corollary}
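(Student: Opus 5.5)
The plan is to derive the corollary from Theorem~\ref{thm:safety} with a union bound over the checked challengers; all of the real content then sits in checking that the union covers the event $\{L(T)\neq L(t)\}$.

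\textbf{Step 1: reduce to margins.} Fix the information $\mathcal{F}_t$. If $L(T)\neq L(t)=L$, then at the endpoint some answer $k\neq L$ satisfies $V_k(T)\geq V_L(T)$, so $M_k(T)\leq 0$. This covers ties too, taking the worst case of the deterministic tie-breaking.

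\textbf{Step 2: show the challenger family is large enough.} I need to show such a $k$ lies in $\mathcal{K}(t)$, or is dominated by an element of it.
\begin{itemize}
\item If $k$ already has positive vote at $t$, it is in $\mathcal{K}(t)$ by definition.
\item If $V_k(t)=0$, then $k$ is covered by $\bot$. The cost vector $c_j^k$ of such a $k$ coincides with $c_j^{\bot}$, because no trace currently votes for $k$ or for $\bot$. Hence the bound of Theorem~\ref{thm:safety} for $k$ is exactly the bound computed for $\bot$.
\end{itemize}
This step is the main obstacle. There may be many unseen answers, so the union bound cannot simply range over $\mathcal{K}(t)$ plus one extra term. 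I would handle this by noting that at most $N$ distinct answers can appear among the $N$ traces at time $T$. The relevant union is therefore over at most $N$ candidate challengers, which is precisely why $\epsilon$ uses $\log(N/\delta)$ rather than $\log(|\mathcal{K}(t)|/\delta)$. Alternatively, one can argue that the realized challenger is one of at most $N$ answers, each satisfying the same per-challenger tail bound.

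\textbf{Step 3: per-challenger bound.} For each such $k$, Eq.~\ref{eq:stopping} gives
\[
M_k(t)-\Gamma_k(t)\geq \epsilon(N,\delta)\geq 0.
\]
Plugging this into Eq.~\ref{eq:safety} with $\epsilon^2=2w_{\max}^2N_{\mathrm{active}}\log(N/\delta)$ yields
\[
P\bigl(M_k(T)\leq 0\mid\mathcal{F}_t\bigr)\leq \exp\bigl(-\log(N/\delta)\bigr)=\delta/N.
\]

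\textbf{Step 4: union bound.} Summing $\delta/N$ over at most $N$ challengers gives
\[
P\bigl(L(T)\neq L(t)\mid\mathcal{F}_t\bigr)\leq\delta,
\]
which is the claim.
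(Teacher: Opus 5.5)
Your Steps 1, 3 and 4 are fine, but Step 2, which you rightly flag as the crux, has a genuine gap. A union bound $P(\bigcup_k E_k\mid\mathcal{F}_t)\le\sum_k P(E_k\mid\mathcal{F}_t)$ must range over a family of events that is fixed given $\mathcal{F}_t$. Your family, ``the at most $N$ answers present at $T$,'' is a random index set, and the pool of possible unseen answer strings is unbounded. Suppose you know that each fixed unseen $b$ satisfies $P(M_b(T)\le 0\mid\mathcal{F}_t)\le\delta/N$, and that only a few answers are ever realized. That still gives no control. If the undecided traces coordinate on a novel string drawn at random from a huge pool, only one answer is realized. Yet a valid union bound must add the same per-string bound $\delta/N$ once for every candidate string, which exceeds $1$ once there are more than $N/\delta$ candidates. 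Your observation that $c_j^b=c_j^{\bot}$ only shows that the per-answer \emph{bounds} coincide; it does not merge the \emph{events}. Relatedly, $\log(N/\delta)$ does not come from counting answers at $T$. It comes from $|\mathcal{K}(t)|\le N$ (at most $N-1$ observed nonleader answers plus $\bot$), which is an $\mathcal{F}_t$-measurable family.

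The missing idea is a pathwise domination that collapses all unseen answers into the single event for $\bot$; this is how the paper defines $\bot$ in the proof of Theorem~\ref{thm:safety}. Set $\widetilde Y_j=2w_j\mathbf{1}\{a_j(t)=L,\,a_j(T)\neq a_j(t)\}+w_j\mathbf{1}\{a_j(t)\neq L,\,a_j(T)\neq a_j(t)\}$ and $M_{\bot}(T)=V_L(t)-\sum_j\widetilde Y_j$. Take any answer $b$ absent at $t$ and look at trace $j$'s realized decrease in the $L$-versus-$b$ margin:
\begin{itemize}
\item it is $0$ if $j$ keeps its answer;
\item it is at most $2w_j$ if $j$ leaves $L$;
\item it is at most $w_j$ otherwise.
\end{itemize}
So it is at most $\widetilde Y_j$ on every sample path. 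Hence $M_b(T)\ge M_{\bot}(T)$ for all unseen $b$ simultaneously, i.e.\ $\bigcup_{b}\{M_b(T)\le 0\}\subseteq\{M_{\bot}(T)\le 0\}$. The $\widetilde Y_j$ are conditionally independent, have range at most $2w_{\max}$, and satisfy $\mathbb{E}[\widetilde Y_j\mid\mathcal{F}_t]=q_j c_j^{\bot}$. Under Eq.~\ref{eq:stopping} for $\bot$, Hoeffding then gives $P(M_{\bot}(T)\le 0\mid\mathcal{F}_t)\le\delta/N$. The union bound over the fixed set $\mathcal{K}(t)$ of at most $N$ checked challengers then yields $\delta$.
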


The proof applies Hoeffding's inequality to $\sum_j X_j c_j^k$ and takes a union bound over challengers; details are in Appendix~\ref{app:proofs}. The lightweight logistic model that estimates $q_j$ from warmup traces is described in \S\ref{sec:experiments}.

\begin{Remark}[Conservative relaxation of optimal stopping]\label{rem:optimal}
The Bayes-optimal rule stops at the earliest $t$ where $P(L(T)\neq L(t)\mid\mathcal{F}_t)\leq\delta$. Since Corollary~\ref{cor:stopping} provides a \emph{sufficient} condition for this event, the certified \method rule ($\gamma=1$, true $q_j$) is strictly more conservative: it stops at the same time or later, never earlier. The formal derivation via margin decomposition is in Appendix~\ref{app:optimality}.
\end{Remark}

\subsection{Calibrated conservatism via $\gamma$}
\label{sec:gamma}

The fully adversarial cost $c_j^k$ assumes that \emph{all} switching mass coordinates against the same challenger. In practice, switching traces disperse across many answers, and some even join the leader. This makes the rule overly cautious.

We relax the adversarial costs with a contraction parameter $\gamma\in[1/2,1]$:
\begin{equation}\label{eq:gamma_cost}
c_j^k(\gamma)=
\begin{cases}
2\gamma w_j, & a_j(t)=L,\\
-w_j, & a_j(t)=k,\\
\gamma w_j, & a_j(t)\notin\{L,k\}.
\end{cases}
\end{equation}
The lower bound $\gamma\geq 1/2$ preserves the irreducible cost: when a leader-voter switches, it loses at least $w_j$ from the leader regardless of destination. Only the ``which challenger benefits'' portion is contracted. The calibrated stopping criterion becomes
\begin{equation}\label{eq:empirical_stop}
M_k(t)\geq \sum_{j\in\mathcal{A}_t} q_j\, c_j^k(\gamma)
\qquad \text{for all } k\in\mathcal{K}(t).
\end{equation}
Eq.~\ref{eq:empirical_stop} with $\gamma<1$ or estimated $\hat q_j$ is an empirical relaxation; Corollary~\ref{cor:stopping} applies only to the certified variant with $\gamma=1$ and true switch probabilities.

We calibrate $\gamma$ per question from warmup traces run to completion. Let $\gamma_{\mathrm{warmup}}$ be the smallest $\gamma$ that would have stopped correctly on the warmup set (i.e., without changing the warmup full-budget winner). We add an upper-confidence correction:
\begin{equation}\label{eq:ucb}
\gamma_{\mathrm{out}}=\min\!\left(1,\;\gamma_{\mathrm{warmup}}+\frac{z}{\sqrt{n_{\mathrm{elig}}}}\right),
\end{equation}
where $n_{\mathrm{elig}}$ is the number of eligible warmup checkpoints and $z=1.0$ in all experiments.

\subsection{Practical implementation}
\label{sec:practical}

The stopping rule (Eq.~\ref{eq:empirical_stop}) requires two learned quantities per question: the switch probabilities $q_j(t)$ and the contraction $\gamma$. Both are estimated from a small set of 16 warmup traces run to completion before the main sample is stopped. We fit a per-question logistic regression on (trace, checkpoint) pairs, labeled by whether the trace's final answer differs from its checkpoint answer, using five $\mathcal{F}_t$-measurable features: checkpoint position, probe confidence, answer-flip count, streak length, and confidence trend. This gives $\hat q_j(t)$ for every active trace at every checkpoint (details in Appendix~\ref{app:qmodel}). The same warmup traces calibrate $\gamma$ via Eq.~\ref{eq:ucb}.

At each checkpoint, $\mathcal{K}(t)$ includes every nonleader answer with positive votes, plus an unseen challenger $\bot$ with zero current votes. For $\bot$, no trace is a $k$-voter, so there is no $-w_j$ benefit: the cost is $c_j^\bot = 2\gamma w_j$ for leader-voters and $\gamma w_j$ for all others. This guards against novel answers emerging after checkpoint $t$. The complete procedure is given in Algorithm~\ref{alg:stopping}.

\begin{algorithm}[t]
\caption{\method: Margin-Adversarial Early Stopping}
\label{alg:stopping}
\begin{algorithmic}[1]
\STATE \textbf{Input:} trace budget $N$, checkpoints $\{t_1,\ldots,t_P\}$, warmup traces, calibration strength $z$.
\STATE Run warmup traces to completion; 
\STATE Fit logistic regression $\hat q$ on warmup (trace, checkpoint) pairs.
\STATE Calibrate $\gamma$ from warmup via Eq.~\ref{eq:ucb}.
\STATE Launch the $N$ main traces in parallel.
\FOR{checkpoint $t\in\{t_1,\ldots,t_P\}$}
    \STATE Update trace statuses and active set $\mathcal{A}_t$.
    \STATE Probe running traces for current answers $a_j(t)$.
    \STATE Compute votes, leader $L(t)$, margins $M_k(t)$.
    \STATE Estimate $\hat q_j(t)$ for $j\in\mathcal{A}_t$; set $\hat q_j(t)=0$ for inactive traces.
    \STATE Build $\mathcal{K}(t)$: observed challengers $\cup$ unseen challenger $\bot$.
    \STATE \textbf{if} $M_k(t)\geq \sum_{j\in\mathcal{A}_t} \hat q_j\, c_j^k(\gamma)$ for all $k\in\mathcal{K}(t)$ \textbf{then return} $L(t)$.
\ENDFOR
\STATE \textbf{return} the full-budget leader $L(T)$.
\end{algorithmic}
\end{algorithm}

\section{Experiments}
\label{sec:experiments}

\subsection{Setup}\label{subsec:setup}

\paragraph{Datasets and models.}
We evaluate on three competition mathematics benchmarks, a standard stress test for LLM reasoning \citep{hendrycks2021measuring}: AIME 2025, HMMT, and BRUMO 2025, with 30 problems each. We test three reasoning models: DeepSeek-R1-Distill-Qwen-8B~\citep{guo2025deepseek}, Qwen3-32B, and Qwen3-next~\citep{qwen3report}. For each model--dataset pair, we generate a pool of 4{,}096 complete traces per problem and probe intermediate answers every 2{,}048 tokens. This gives 9 model--dataset settings.

\paragraph{Evaluation protocol.}
We simulate 512-trace parallel runs by bootstrap sampling from the 4{,}096-trace pool, using 64 iterations per question. In each bootstrap run, 16 traces are designated as warmup traces for switch-model fitting, $\gamma$ calibration, and DeepConf threshold calibration. The remaining traces form the main parallel sample on which stopping decisions are evaluated. Token savings are reported relative to the corresponding full-budget voting baseline:
\[
\mathrm{savings}=\mathrm{mean}_{\text{questions}}\left(1-\frac{\mathrm{tokens}_{\method}}{\mathrm{tokens}_{\mathrm{baseline}}}\right).
\]
Accuracy is the benchmark accuracy of the answer returned by the stopped or full-budget vote. Token savings include the 16 warmup traces in the denominator; per-checkpoint probe completions (up to 50 tokens each) are excluded from the count as they are small relative to full trace lengths.

\paragraph{Voting pipelines.}
We evaluate \method as an early-stopping layer on top of two voting pipelines. Self-consistency (SC) uses uniform weights $w_j=1$ and majority vote \citep{wang2022self}. DeepConf Online (DCO) \citep{fu2025deep} weights and filters traces by confidence; traces below a warmup-calibrated threshold are discarded. We remove DeepConf’s adaptive sampling component, as it introduces sequential dependencies after the warm-up phase and therefore breaks full parallelism during inference.
\method operates on the resulting effective vote identically in both cases, testing whether aggregate-level margin stopping provides savings beyond per-trace confidence stopping.

\paragraph{Compared methods.}
For each pipeline, we report the fully conservative variant ($\gamma=1$), the calibrated variant (\S\ref{sec:practical}), and an oracle-$q$ diagnostic that replaces the learned switch model with the retrospective indicator $q_j^*=\mathbf{1}\{a_j(t)\neq a_j(T)\}$. Full per-method numbers are in Appendix Tables~\ref{tab:full_ds8b}, \ref{tab:full_q32b}, and \ref{tab:full_qnext}.

\subsection{Main results}

\begin{table}[t]
\centering
\small
\setlength{\tabcolsep}{4pt}
\begin{tabular}{llcccccc}
\toprule
& & \multicolumn{3}{c}{SC Family} & \multicolumn{3}{c}{DeepConf Family} \\
\cmidrule(lr){3-5} \cmidrule(lr){6-8}
Model & Dataset & Baseline & +\method & Savings & Baseline & +\method & Savings \\
\midrule
\multirow{3}{*}{DeepSeek-8B}
& BRUMO '25  & 93.2 & 93.2 & 33.7\% & 93.3 & 93.3 & 17.1\% \\
& AIME '25   & 83.3 & 83.3 & 36.3\% & 88.6 & 88.6 & 18.2\% \\
& HMMT       & 70.0 & 70.0 & 29.3\% & 78.7 & 78.6 & 13.8\% \\
\midrule
\multirow{3}{*}{Qwen3-32B}
& BRUMO '25  & 90.9 & 90.3 & 36.2\% & 92.4 & 93.0 & 23.2\% \\
& AIME '25   & 80.1 & 80.1 & 37.1\% & 79.8 & 80.6 & 26.0\% \\
& HMMT       & 62.8 & 62.8 & 24.7\% & 65.1 & 65.5 & 15.7\% \\
\midrule
\multirow{3}{*}{Qwen3-next}
& BRUMO '25  & 96.7 & 96.7 & 47.4\% & 95.4 & 95.5 & 29.4\% \\
& AIME '25   & 86.9 & 87.7 & 41.4\% & 90.0 & 90.0 & 21.2\% \\
& HMMT       & 86.9 & 86.9 & 31.4\% & 82.2 & 82.0 & 16.1\% \\
\bottomrule
\end{tabular}
\caption{Accuracy (\%) and token savings for calibrated \method. ``Baseline'' is the full-budget voting accuracy; ``+\method'' is the early-stopped accuracy. Across all 18 settings \method preserves accuracy within 0.6pp where it changes and improves it by up to 0.8pp in several settings, while saving 25--47\% (SC) and 14--29\% (DeepConf) of tokens.}
\label{tab:main}
\end{table}

\paragraph{Token savings.}
Table~\ref{tab:main} and Fig.~\ref{fig:overview} show that \method saves 25--47\% of tokens under SC and an additional 14--29\% on top of DeepConf. DeepConf starts from a stronger and cheaper baseline because it already filters and truncates low-confidence traces, so the additional savings from \method are smaller but still consistent. Savings are largest in settings where the model reaches consensus early, such as Qwen3-next on BRUMO.

\paragraph{Accuracy.}
Across all 18 settings, early-stopped accuracy is preserved: where it changes it stays within 0.6 percentage points of the full-budget baseline, and in several settings it improves by up to 0.8 points (e.g., Qwen3-next on AIME under SC, 86.9\% $\rightarrow$ 87.7\%, and Qwen3-32B on BRUMO under DeepConf, 92.4\% $\rightarrow$ 93.0\%). Most settings match exactly; the largest drop is Qwen3-32B on BRUMO under SC (90.9\% $\rightarrow$ 90.3\%).

\subsection{Ablations}

\begin{figure}[t]
\centering
\includegraphics[width=\textwidth]{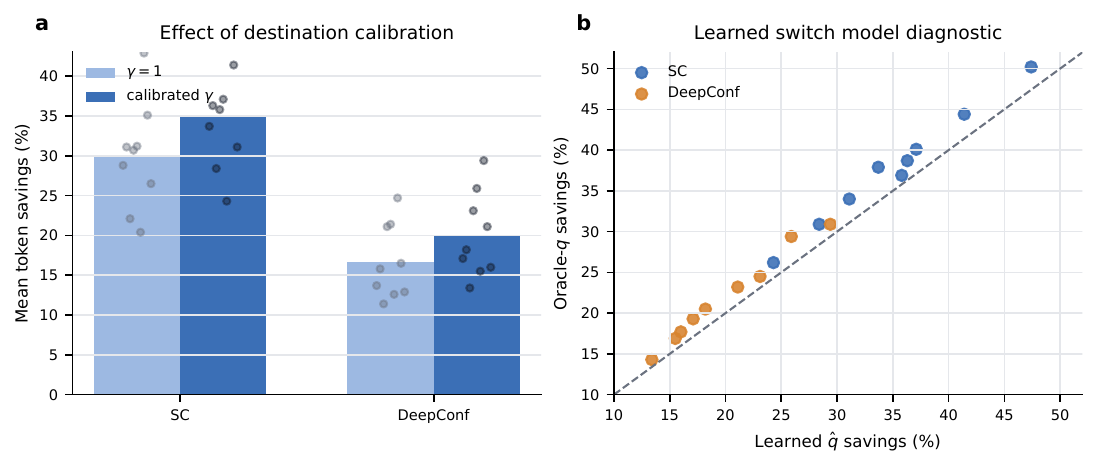}
\caption{Effect of destination calibration ($\gamma$) and switch probability estimation. \textbf{a:} Per-question $\gamma$ calibration adds 4--7pp savings under SC and 2--5pp under DeepConf beyond the fully conservative $\gamma{=}1$ variant, confirming that warmup traces contain useful information about destination behavior. \textbf{b:} The learned 5-feature logistic model closely matches oracle-$q$, with a gap of only 1--4pp, indicating that the switch-event signal is well captured by trace-intrinsic features.}
\label{fig:ablations}
\end{figure}

Fig.~\ref{fig:ablations} summarizes two key ablations across all model--dataset settings.

\paragraph{Destination calibration.}
Even the fully conservative rule ($\gamma=1$) achieves large savings, but per-question $\gamma$ calibration adds 4--7 points under SC and 2--5 points on top of DeepConf. This shows that warmup traces contain useful information about how much switch mass actually behaves adversarially. The calibrated contraction captures part of the destination effect without fitting a full destination model.

\paragraph{Switch probability estimation.}
The learned switch model closely matches oracle-$q$ across all settings, with a consistent gap of only 1--4 percentage points. This confirms that the five trace-intrinsic features capture the switch-event signal well. Together with the $\gamma$ ablation, this supports the two-part design: learn which traces are likely to switch, then use per-question contraction to approximate aggregate destination behavior.

\subsection{Comparison with Parallel-Probe}
\label{sec:pp_comparison}

We compare against Parallel-Probe \citep{zheng2026parallel}, the closest concurrent method in our setting.
Parallel-Probe uses two heuristics: (i) \emph{consensus early stopping}, halt when the majority answer is unchanged for $u$ consecutive probes, and (ii) \emph{deviation-based pruning}, remove any trace disagreeing with consensus for $k$ consecutive probes.
Since the original method probes every 500 tokens while our data probes every 2{,}048 tokens, we run three configurations: a \emph{token-equivalent} mapping ($u{=}3$, $k{=}2$, warmup${=}4$ probes, matching their ${\sim}7{,}000$-token stability window), a conservative variant ($u{=}4$), and their \emph{literal} parameters ($u{=}14$, $k{=}7$, warmup${=}15$).
All Parallel-Probe runs use uniform weighting and no filtering, matching their published setup.

\begin{table}[t]
\centering
\small
\setlength{\tabcolsep}{3.5pt}
\begin{tabular}{llcccccc}
\toprule
& & \multicolumn{2}{c}{PP ($u{=}3$)} & \multicolumn{2}{c}{PP ($u{=}4$)} & \multicolumn{2}{c}{\method (SC)} \\
\cmidrule(lr){3-4} \cmidrule(lr){5-6} \cmidrule(lr){7-8}
Model & Dataset & Acc. & Sav. & Acc. & Sav. & Acc. & Sav. \\
\midrule
\multirow{3}{*}{DeepSeek-8B}
& BRUMO '25  & 48.2 & 42.0 & 51.2 & 38.5 & \textbf{93.2} & 33.7 \\
& AIME '25   & 60.3 & 39.7 & 63.2 & 33.7 & \textbf{83.3} & 36.3 \\
& HMMT       & 34.6 & 51.4 & 37.8 & 47.0 & \textbf{70.0} & 29.3 \\
\midrule
\multirow{3}{*}{Qwen3-32B}
& BRUMO '25  & 72.6 & 21.9 & 72.6 & 19.4 & \textbf{90.9} & 36.2 \\
& AIME '25   & 74.0 & 20.0 & 77.3 & 14.5 & \textbf{80.1} & 37.1 \\
& HMMT       & 40.4 & 33.4 & 41.1 & 27.9 & \textbf{62.8} & 24.7 \\
\midrule
\multirow{3}{*}{Qwen3-next}
& BRUMO '25  & 70.2 & 33.9 & 72.6 & 30.5 & \textbf{96.7} & 47.4 \\
& AIME '25   & 75.6 & 34.0 & 77.3 & 28.4 & \textbf{86.9} & 41.4 \\
& HMMT       & 47.9 & 50.6 & 51.1 & 46.1 & \textbf{86.9} & 31.4 \\
\bottomrule
\end{tabular}
\caption{Comparison with Parallel-Probe (PP). Accuracy (\%) and per-question token savings (\%). PP achieves comparable savings only by sacrificing a lot of accuracy. Under its literal parameters ($u{=}14$), PP preserves accuracy but saves ${\leq}4\%$ tokens (not shown). \method achieves 25--47\% savings without accuracy loss.}
\label{tab:pp_comparison}
\end{table}

\paragraph{The accuracy--savings dilemma.} Table~\ref{tab:pp_comparison} reveals that Parallel-Probe faces a fundamental tradeoff. With token-equivalent parameters ($u{=}3$), it achieves 20--51\% savings but loses 6--45pp accuracy, severely on harder benchmarks (HMMT: 34.6\% vs.\ the 70.0\% baseline on DeepSeek-8B). With their literal parameters ($u{=}14$), accuracy is preserved but savings collapse to ${\leq}4\%$: most traces have fewer than 14 probe positions, so the stopping criterion never fires.
In contrast, \method achieves 25--47\% savings while preserving accuracy in all 9 settings.

\paragraph{Why mode stability fails.} The root cause is that PP's consensus stability condition conflates \emph{having been stable} with \emph{being safe to stop}. Fig.~\ref{fig:pp_failure} illustrates this on a concrete example: at early checkpoints, 89\% of traces vote for the wrong answer and the majority appears stable, so consensus stopping fires. The correct answer only overtakes much later. PP's pruning exacerbates the problem: branches disagreeing with a (potentially incorrect) early consensus are eliminated, destroying the diversity needed for self-correction. \method's margin-based criterion directly quantifies the remaining adversarial risk and fires only when that risk is provably insufficient to overturn the leader.

\section{Related work}
\label{sec:related_work}

\paragraph{Self-consistency and parallel sampling.}
Chain-of-thought and related decomposition prompts improve multi-step reasoning \citep{wei2022chain,kojima2022large,zhou2023least}. Self-consistency \citep{wang2022self} builds on this by sampling multiple reasoning paths and taking a majority vote, extended to universal self-consistency for free-form answers \citep{chen2023universal} and scaled to large budgets \citep{brown2024large}. Related test-time search methods organize reasoning paths more explicitly, for example through tree-structured exploration \citep{yao2023tree}. DeepConf \citep{fu2025deep} uses model-internal confidence for confidence-weighted voting, filtering, and online truncation, substantially improving accuracy on hard questions. These methods all decide \emph{how to vote} but not \emph{when to stop}, as they run all traces to completion. Our framework is orthogonal: it determines when the vote outcome has stabilized, and applies identically on top of any voting scheme, including both uniform and confidence-weighted voting. This contrasts with other test-time scaling approaches that train explicit verifiers or process reward models to evaluate candidate solutions or reasoning steps~\citep{cobbe2021training,uesato2022solving,lightman2023let,zhang2025generative}, as~\method achieves efficiency purely by observing the generation dynamics of the base model.

\paragraph{Adaptive sample budgets.}
A related line of work reduces cost by drawing fewer \emph{complete} samples, connecting LLM self-consistency to classical sequential testing and time-uniform inference~\citep{wald1945sequential,howard2021time}. Adaptive-Consistency \citep{aggarwal2023adaptive} generates samples sequentially and stops drawing more once a Dirichlet-based criterion indicates vote stability; ESC \citep{li2024esc} optimizes a performance-cost tradeoff; CGES \citep{aghazadeh2025cges}, Reliability-Aware Adaptive Self-Consistency \citep{kim2026reliability}, and Optimal Bayesian Stopping \citep{huang2026optimal} use confidence, reliability, or Bayesian posterior estimates; Certified Self-Consistency \citep{cordero2025certified} and CITE \citep{ota2026cite} derive anytime-valid guarantees; and self-calibration methods use model confidence to allocate test-time compute \citep{huang2025efficient}. These methods decide \emph{how many} complete samples to generate, but every sample runs to completion. In the parallel deployment standard for latency-sensitive applications where all $N$ traces are launched simultaneously, these methods cannot reduce wall-clock latency, since they require sequential generation to decide whether to draw the next sample. Our setting is different: all traces run in parallel, and we stop them \emph{mid-generation} by probing intermediate answers at checkpoints, directly reducing both total tokens and wall-clock latency.

\paragraph{Dynamic generation-time efficiency.}
A separate line of research targets inference efficiency by dynamically halting or accelerating computation during the generation process itself. At the token and layer levels, techniques such as speculative decoding~\citep{leviathan2023fast, chen2023accelerating} and confident early-exit mechanisms~\citep{schuster2022confident} reduce wall-clock latency during generation. For reasoning models, recent methods stop or prune individual chains based on intermediate confidence, answer entropy, or learned policies~\citep{yang2025dynamic,laaouach2025halt,hou2025thinkprune}. More recently, this philosophy has been elevated to parallel test-time scaling. The concurrent Parallel-Probe method~\citep{zheng2026parallel} implements this by extracting intermediate answers and halting generation when the aggregate majority vote remains stable for a fixed number of probes. However, relying on consensus stability heuristics risks premature convergence, as a historically stable leader may still possess a fragile margin.~\method addresses this by shifting the focus from historical stability to future risk, providing a rigorous margin-based safety guarantee.

\section{Conclusion}
\label{sec:conclusion}

We introduce~\method, an early-stopping layer for parallel LLM test-time scaling.~\method probes partial traces, estimates which traces are still likely to switch answers, and stops when the current vote margin is large enough to absorb adversarial future switches. With true switch probabilities and full destination conservatism, the rule gives a high-probability guarantee of matching the full-budget vote. Empirically, a lightweight learned switch model closely tracks oracle switch behavior, and per-question contraction calibration captures additional destination structure beyond the conservative worst case. Together these components save 25--47\% of self-consistency tokens and 14--29\% on top of DeepConf while matching the accuracy of full-budget baselines across all tested settings. These results suggest that aggregate margin stability is a distinct and useful source of test-time efficiency.

There are some natural directions for future work. Currently, the probing interval is fixed (every 2{,}048 tokens). An adaptive scheme that probes more frequently when the margin is tight and less when it is large could reduce probing overhead while maintaining responsiveness. In this work, we only consider mathematical reasoning as the evaluation task; applying \method to other domains (code generation, agentic tasks) where trace switch dynamics may differ is an important next step. Another interesting future direction is to explore other methods to accurately estimate where traces will switch in the end. If the distribution of challengers has good statistical properties, it'll be possible to design a more principled estimator than using the calibration coefficient $\gamma$ as a surrogate.

\bibliographystyle{plain}
\bibliography{ref}

@misc{qwen3report,
  title={{Qwen3} Technical Report},
  author={An Yang and Anfeng Li and Baosong Yang and Beichen Zhang and Binyuan Hui and Bo Zheng and Bowen Yu and Chang Gao and Chengen Huang and Chenxu Lv and Chujie Zheng and Dayiheng Liu and Fan Zhou and Fei Huang and Feng Hu and Hao Ge and Haoran Wei and Huan Lin and Jialong Tang and Jian Yang and Jianhong Tu and Jianwei Zhang and Jianxin Yang and Jiaxi Yang and Jing Zhou and Jingren Zhou and Junyang Lin and Kai Dang and Keqin Bao and Kexin Yang and Le Yu and Lianghao Deng and Mei Li and Mingfeng Xue and Mingze Li and Pei Zhang and Peng Wang and Qin Zhu and Rui Men and Ruize Gao and Shixuan Liu and Shuang Luo and Tianhao Li and Tianyi Tang and Wenbiao Yin and Xingzhang Ren and Xinyu Wang and Xinyu Zhang and Xuancheng Ren and Yang Fan and Yang Su and Yichang Zhang and Yinger Zhang and Yu Wan and Yuqiong Liu and Zekun Wang and Zeyu Cui and Zhenru Zhang and Zhipeng Zhou and Zihan Qiu},
  year={2025},
  eprint={2505.09388},
  archivePrefix={arXiv},
  primaryClass={cs.CL}
}

@article{fu2025deep,
  title={Deep think with confidence},
  author={Fu, Yichao and Wang, Xuewei and Tian, Yuandong and Zhao, Jiawei},
  journal={arXiv preprint arXiv:2508.15260},
  year={2025}
}

@inproceedings{wang2022self,
  title={Self-consistency improves chain of thought reasoning in language models},
  author={Wang, Xuezhi and Wei, Jason and Schuurmans, Dale and Le, Quoc and Chi, Ed and Narang, Sharan and Chowdhery, Aakanksha and Zhou, Denny},
  booktitle={International Conference on Learning Representations},
  year={2023}
}

@article{brown2024large,
  title={Large language monkeys: Scaling inference compute with repeated sampling},
  author={Brown, Bradley and Juravsky, Jordan and Ehrlich, Ryan and Clark, Ronald and Le, Quoc V and Ré, Christopher and Mirhoseini, Azalia},
  journal={arXiv preprint arXiv:2407.21787},
  year={2024}
}

@article{snell2024scaling,
  title={Scaling {LLM} test-time compute optimally can be more effective than scaling model parameters},
  author={Snell, Charlie and Lee, Jaehoon and Xu, Kelvin and Kumar, Aviral},
  journal={arXiv preprint arXiv:2408.03314},
  year={2024}
}

@article{chen2023universal,
  title={Universal self-consistency for large language model generation},
  author={Chen, Xinyun and Aksitov, Renat and Alon, Uri and Ren, Jie and Xiao, Kefan and Yin, Pengcheng and Prakash, Sushant and Sutton, Charles and Wang, Xuezhi and Zhou, Denny},
  journal={arXiv preprint arXiv:2311.17311},
  year={2023}
}

@article{wald1945sequential,
  title={Sequential tests of statistical hypotheses},
  author={Wald, Abraham},
  journal={The Annals of Mathematical Statistics},
  volume={16},
  number={2},
  pages={117--186},
  year={1945}
}

@article{howard2021time,
  title={Time-uniform, nonparametric, nonasymptotic confidence sequences},
  author={Howard, Steven R and Ramdas, Aaditya and McAuliffe, Jon and Sekhon, Jasjeet},
  journal={The Annals of Statistics},
  volume={49},
  number={2},
  pages={1055--1080},
  year={2021}
}

@inproceedings{aggarwal2023adaptive,
  title={Let's Sample Step by Step: Adaptive-Consistency for Efficient Reasoning and Coding with {LLM}s},
  author={Aggarwal, Pranjal and Madaan, Aman and Yang, Yiming and Mausam},
  booktitle={Proceedings of the 2023 Conference on Empirical Methods in Natural Language Processing},
  pages={12375--12396},
  year={2023},
  publisher={Association for Computational Linguistics}
}

@inproceedings{li2024esc,
  title={Escape Sky-High Cost: Early-Stopping Self-Consistency for Multi-Step Reasoning},
  author={Li, Yiwei and Yuan, Peiwen and Feng, Shaoxiong and Pan, Boyuan and Wang, Xinglin and Sun, Bin and Wang, Heda and Li, Kan},
  booktitle={International Conference on Learning Representations},
  year={2024}
}

@article{zheng2026parallel,
  title={Parallel-Probe: Towards efficient parallel thinking via {2D} probing},
  author={Zheng, Tong and Huang, Chengsong and Dai, Runpeng and others},
  journal={arXiv preprint arXiv:2602.03845},
  year={2026}
}

@article{aghazadeh2025cges,
  title={{CGES}: Confidence-guided early stopping for efficient and accurate self-consistency},
  author={Aghazadeh, Ehsan and Ghasemi, Ahmad and Beyhaghi, Hedyeh and Pishro-Nik, Hossein},
  journal={arXiv preprint arXiv:2511.02603},
  year={2025}
}

@article{cordero2025certified,
  title={Certified self-consistency: Statistical guarantees and test-time training for reliable reasoning in {LLMs}},
  author={Cordero-Encinar, Paula and Duncan, Andrew B.},
  journal={arXiv preprint arXiv:2510.17472},
  year={2025}
}

@article{huang2026optimal,
  title={Optimal {B}ayesian stopping for efficient inference of consistent {LLM} answers},
  author={Huang, Jingkai and Ma, Will and Zhou, Zhengyuan},
  journal={arXiv preprint arXiv:2602.05395},
  year={2026}
}

@inproceedings{platt1999probabilistic,
  title={Probabilistic outputs for support vector machines and comparisons to regularized likelihood methods},
  author={Platt, John C.},
  booktitle={Advances in Large Margin Classifiers},
  pages={61--74},
  year={1999},
  publisher={MIT Press}
}

@inproceedings{sglang,
 author = {Zheng, Lianmin and Yin, Liangsheng and Xie, Zhiqiang and Sun, Chuyue and Huang, Jeff and Yu, Cody Hao and Cao, Shiyi and Kozyrakis, Christos and Stoica, Ion and Gonzalez, Joseph E. and Barrett, Clark and Sheng, Ying},
 booktitle = {Advances in Neural Information Processing Systems},
 doi = {10.52202/079017-2000},
 editor = {A. Globerson and L. Mackey and D. Belgrave and A. Fan and U. Paquet and J. Tomczak and C. Zhang},
 pages = {62557--62583},
 publisher = {Curran Associates, Inc.},
 title = {SGLang: Efficient Execution of Structured Language Model Programs},
 url = {https://proceedings.neurips.cc/paper_files/paper/2024/file/724be4472168f31ba1c9ac630f15dec8-Paper-Conference.pdf},
 volume = {37},
 year = {2024}
}

@article{wei2022chain,
  title={Chain-of-thought prompting elicits reasoning in large language models},
  author={Wei, Jason and Wang, Xuezhi and Schuurmans, Dale and Bosma, Maarten and Xia, Fei and Chi, Ed and Le, Quoc V and Zhou, Denny and others},
  journal={Advances in neural information processing systems},
  volume={35},
  pages={24824--24837},
  year={2022}
}

@inproceedings{lightman2023let,
  title={Let's Verify Step by Step},
  author={Lightman, Hunter and Kosaraju, Vineet and Burda, Yuri and Edwards, Harrison and Baker, Bowen and Lee, Teddy and Leike, Jan and Schulman, John and Sutskever, Ilya and Cobbe, Karl},
  booktitle={International Conference on Learning Representations},
  year={2024}
}

@article{cobbe2021training,
  title={Training verifiers to solve math word problems},
  author={Cobbe, Karl and Kosaraju, Vineet and Bavarian, Mohammad and Chen, Mark and Jun, Heewoo and Kaiser, Lukasz and Plappert, Matthias and Tworek, Jerry and Hilton, Jacob and Nakano, Reiichiro and others},
  journal={arXiv preprint arXiv:2110.14168},
  year={2021}
}

@inproceedings{leviathan2023fast,
  title={Fast inference from transformers via speculative decoding},
  author={Leviathan, Yaniv and Kalman, Matan and Matias, Yossi},
  booktitle={International Conference on Machine Learning},
  pages={19274--19286},
  year={2023},
  organization={PMLR}
}

@article{kadavath2022language,
  title={Language models (mostly) know what they know},
  author={Kadavath, Saurav and Conerly, Tom and Askell, Amanda and Henighan, Tom and Drain, Dawn and Perez, Ethan and Schiefer, Nicholas and Hatfield-Dodds, Zac and DasSarma, Nova and Tran-Johnson, Eli and others},
  journal={arXiv preprint arXiv:2207.05221},
  year={2022}
}

@article{guo2025deepseek,
  title={{DeepSeek-R1}: Incentivizing Reasoning Capability in {LLMs} via Reinforcement Learning},
  author={{DeepSeek-AI} and Guo, Daya and Yang, Dejian and Zhang, Haowei and Song, Junxiao and Zhang, Ruoyu and Xu, Runxin and Zhu, Qihao and Ma, Shirong and Wang, Peiyi and others},
  journal={arXiv preprint arXiv:2501.12948},
  year={2025}
}

@article{chen2023accelerating,
  title={Accelerating large language model decoding with speculative sampling},
  author={Chen, Charlie and Borgeaud, Sebastian and Irving, Geoffrey and Lespiau, Jean-Baptiste and Sifre, Laurent and Jumper, John},
  journal={arXiv preprint arXiv:2302.01318},
  year={2023}
}

@article{schuster2022confident,
  title={Confident adaptive language modeling},
  author={Schuster, Tal and Fisch, Adam and Gupta, Jai and Dehghani, Mostafa and Bahri, Dara and Tran, Vinh and Tay, Yi and Metzler, Donald},
  journal={Advances in Neural Information Processing Systems},
  volume={35},
  pages={17456--17472},
  year={2022}
}

@inproceedings{kojima2022large,
  title={Large Language Models are Zero-Shot Reasoners},
  author={Kojima, Takeshi and Gu, Shixiang Shane and Reid, Machel and Matsuo, Yutaka and Iwasawa, Yusuke},
  booktitle={Advances in Neural Information Processing Systems},
  volume={35},
  pages={22199--22213},
  year={2022}
}

@inproceedings{zhou2023least,
  title={Least-to-Most Prompting Enables Complex Reasoning in Large Language Models},
  author={Zhou, Denny and Sch{\"a}rli, Nathanael and Hou, Le and Wei, Jason and Scales, Nathan and Wang, Xuezhi and Schuurmans, Dale and Cui, Claire and Bousquet, Olivier and Le, Quoc and Chi, Ed},
  booktitle={International Conference on Learning Representations},
  year={2023}
}

@article{yao2023tree,
  title={Tree of thoughts: Deliberate problem solving with large language models},
  author={Yao, Shunyu and Yu, Dian and Zhao, Jeffrey and Shafran, Izhak and Griffiths, Tom and Cao, Yuan and Narasimhan, Karthik},
  journal={Advances in neural information processing systems},
  volume={36},
  pages={11809--11822},
  year={2023}
}

@article{uesato2022solving,
  title={Solving Math Word Problems with Process- and Outcome-Based Feedback},
  author={Uesato, Jonathan and Kushman, Nate and Kumar, Ramana and Song, Francis and Siegel, Noah and Wang, Lisa and Creswell, Antonia and Irving, Geoffrey and Higgins, Irina},
  journal={arXiv preprint arXiv:2211.14275},
  year={2022}
}

@inproceedings{zhang2025generative,
  title={Generative Verifiers: Reward Modeling as Next-Token Prediction},
  author={Zhang, Lunjun and Hosseini, Arian and Bansal, Hritik and Kazemi, Mehran and Kumar, Aviral and Agarwal, Rishabh},
  booktitle={International Conference on Learning Representations},
  year={2025}
}

@article{kim2026reliability,
  title={Reliability-Aware Adaptive Self-Consistency for Efficient Sampling in {LLM} Reasoning},
  author={Kim, Junseok and Yang, Nakyeong and Min, Kyungmin and Jung, Kyomin},
  journal={arXiv preprint arXiv:2601.02970},
  year={2026}
}

@article{ota2026cite,
  title={{CITE}: Anytime-Valid Statistical Inference in {LLM} Self-Consistency},
  author={Ota, Hirofumi and Iwase, Naoto and Ichihara, Yuki and Komiyama, Junpei and Imaizumi, Masaaki},
  journal={arXiv preprint arXiv:2605.05873},
  year={2026}
}

@article{huang2025efficient,
  title={Efficient Test-Time Scaling via Self-Calibration},
  author={Huang, Chengsong and Huang, Langlin and Leng, Jixuan and Liu, Jiacheng and Huang, Jiaxin},
  journal={arXiv preprint arXiv:2503.00031},
  year={2025}
}

@article{yang2025dynamic,
  title={Dynamic Early Exit in Reasoning Models},
  author={Yang, Chenxu and Si, Qingyi and Duan, Yongjie and Zhu, Zheliang and Zhu, Chenyu and Li, Qiaowei and Chen, Minghui and Lin, Zheng and Wang, Weiping},
  journal={arXiv preprint arXiv:2504.15895},
  year={2025}
}

@inproceedings{laaouach2025halt,
  title={{HALT-CoT}: Model-Agnostic Early Stopping for Chain-of-Thought Reasoning via Answer Entropy},
  author={Laaouach, Yassir},
  booktitle={4th Muslims in ML Workshop co-located with ICML 2025},
  url={https://openreview.net/forum?id=CX5c7C1CZa},
  year={2025}
}

@article{hou2025thinkprune,
  title={ThinkPrune: Pruning Long Chain-of-Thought of {LLMs} via Reinforcement Learning},
  author={Hou, Bairu and Zhang, Yang and Ji, Jiabao and Liu, Yujian and Qian, Kaizhi and Andreas, Jacob and Chang, Shiyu},
  journal={arXiv preprint arXiv:2504.01296},
  year={2025}
}

@article{lin2022teaching,
  title={Teaching Models to Express Their Uncertainty in Words},
  author={Lin, Stephanie and Hilton, Jacob and Evans, Owain},
  journal={Transactions on Machine Learning Research},
  year={2022}
}

@inproceedings{hendrycks2021measuring,
  title={Measuring Mathematical Problem Solving with the {MATH} Dataset},
  author={Hendrycks, Dan and Burns, Collin and Kadavath, Saurav and Arora, Akul and Basart, Steven and Tang, Eric and Song, Dawn and Steinhardt, Jacob},
  booktitle={Advances in Neural Information Processing Systems},
  volume={34},
  pages={15941--15952},
  year={2021}
}

\newpage
\appendix

\section{Proofs}
\label{app:proofs}

\subsection{Margin decomposition}

\begin{Proposition}[Margin decomposition]\label{thm:decomp}
Fix checkpoint $t$ and challenger $k\neq L(t)$. The full-budget margin splits as
\begin{equation}\label{eq:decomp}
M_k(T)=R_k(t)+\Phi_k(t),
\end{equation}
where $R_k(t)=\sum_{\substack{j:a_j(t)=L,\,a_j(T)=a_j(t)}}w_j - \sum_{\substack{j:a_j(t)=k,\,a_j(T)=a_j(t)}}w_j$ is the retained margin, and $\Phi_k(t)=\sum_{\substack{j:a_j(T)=L,\,a_j(t)\neq L}}w_j - \sum_{\substack{j:a_j(T)=k,\,a_j(t)\neq k}}w_j$ is the switch flow.
\end{Proposition}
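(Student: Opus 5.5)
This is an exact bookkeeping identity, so the plan is to expand both sides as sums of indicator-weighted terms and check that they agree trace by trace. Write $L=L(t)$ and fix $k\neq L$. By definition of the vote, $M_k(T)=V_L(T)-V_k(T)=\sum_j w_j\bigl(\mathbf{1}\{a_j(T)=L\}-\mathbf{1}\{a_j(T)=k\}\bigr)$. Here $L$ is held fixed as the checkpoint leader; it is not $L(T)$. Because every quantity is a sum over traces $j$ of $w_j$ times indicators, it suffices to show that for each $j$ the contribution to $M_k(T)$ equals the contribution to $R_k(t)+\Phi_k(t)$.

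For a single trace, split according to whether it stays or switches. Split $\mathbf{1}\{a_j(T)=L\}$ according to whether $a_j(t)=L$ or not:
\[
\mathbf{1}\{a_j(T)=L\}=\mathbf{1}\{a_j(T)=L,\,a_j(t)=L\}+\mathbf{1}\{a_j(T)=L,\,a_j(t)\neq L\}.
\]
The first event coincides with $\{a_j(t)=L,\,a_j(T)=a_j(t)\}$, which is exactly the index set of the positive part of $R_k(t)$. The second is the index set of the positive part of $\Phi_k(t)$. Do the same for $\mathbf{1}\{a_j(T)=k\}$ with $k$ in place of $L$; this yields the negative parts of $R_k(t)$ and $\Phi_k(t)$. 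Summing over $j$ with weights $w_j$ then gives the identity.

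Some cases need a brief check. Discarded traces have $a_j=\varnothing$ and $w_j=0$, so they contribute nothing to either side. Finished traces have $a_j(T)=a_j(t)$, so they land only in $R_k(t)$, consistently with the indicator split. The weight $w_j$ is the same at $t$ and $T$, as in the setup. Trace $j$ keeps the same weight throughout, so no reweighting term appears.

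There is no genuine obstacle. The only point needing care is reading the index sets correctly: in $R_k(t)$ the condition ``$a_j(t)=L,\,a_j(T)=a_j(t)$'' is equivalent to ``$a_j(t)=a_j(T)=L$'', and the switch-flow sets partition the remaining traces ending at $L$ (respectively at $k$). With this reading, the identity follows directly from the decomposition of each indicator.
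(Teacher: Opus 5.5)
Your proposal is correct and follows essentially the same route as the paper. The paper partitions the final vote for $L$ (and for $k$) into traces that stayed and traces that switched in, then subtracts; your per-trace indicator split is that same partition, written out at the level of individual terms.
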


\begin{proof}
Partition the final vote for $L$ into traces that already voted for $L$ at checkpoint $t$ and stayed there, plus traces that switched into $L$. Do the same for $k$. Subtracting the two final vote totals gives Eq.~\ref{eq:decomp}.
\end{proof}

Using switch indicators $X_j = \mathbf{1}\{a_j(T)\neq a_j(t)\}$, the retained margin becomes $R_k = M_k(t) - \sum_{j:a_j(t)=L} X_j w_j + \sum_{j:a_j(t)=k} X_j w_j$. Bounding $\Phi_k$ adversarially (every departing leader/neutral voter joins $k$): $\Phi_k \geq -\sum_{j:a_j(t)=L} X_j w_j - \sum_{j:a_j(t)\notin\{L,k\}} X_j w_j$. Summing $R_k + \Phi_k$ and collecting by current vote yields $M_k(T) \geq M_k(t) - \sum_{j\in\mathcal{A}_t} X_j \cdot c_j^k$, recovering Definition~\ref{def:cost}.

\subsection{Proof of Theorem~\ref{thm:safety}}

\begin{proof}
First consider an observed challenger $k\neq L(t)$. Let
\[
Y_j =
w_j\!\left(\mathbf{1}\{a_j(t)=L\}-\mathbf{1}\{a_j(t)=k\}\right)
-
w_j\!\left(\mathbf{1}\{a_j(T)=L\}-\mathbf{1}\{a_j(T)=k\}\right)
\]
be trace $j$'s realized decrease in the leader-versus-$k$ margin between checkpoint $t$ and the full-budget endpoint $T$. Conditional on $\mathcal{F}_t$, the variables $Y_j$ are independent by Assumption~\ref{ass:indep}. Each has range width at most $2w_{\max}$, since one trace can change the leader-versus-$k$ margin by at most $2w_j$. If $q_j(t)=0$, then $a_j(T)=a_j(t)$ almost surely and $Y_j=0$ almost surely, so only the $N_{\mathrm{active}}$ traces contribute nonzero range.

Definition~\ref{def:cost} upper-bounds the conditional expectation of each trace's adverse contribution. If $a_j(t)=L$, a switch can decrease the margin by at most $2w_j$, so $\mathbb{E}[Y_j\mid\mathcal{F}_t]\leq 2q_j(t)w_j$. If $a_j(t)=k$, any switch away from $k$ increases the margin by at least $w_j$, so $\mathbb{E}[Y_j\mid\mathcal{F}_t]\leq -q_j(t)w_j$. If $a_j(t)\notin\{L,k\}$, a switch can decrease the margin by at most $w_j$, so $\mathbb{E}[Y_j\mid\mathcal{F}_t]\leq q_j(t)w_j$. Therefore
\[
\mu=\mathbb{E}\!\left[\sum_jY_j\mid\mathcal{F}_t\right]\leq \sum_j q_j(t)c_j^k=\Gamma_k(t;1).
\]
Challenger $k$ overtakes the leader only if $\sum_jY_j\geq M_k(t)$. For $M_k(t)>\Gamma_k(t;1)$, the bound $\mu\leq\Gamma_k(t;1)$ implies that this event requires $\sum_j(Y_j-\mathbb{E}[Y_j\mid\mathcal{F}_t])\geq M_k(t)-\Gamma_k(t;1)$. Hoeffding's inequality gives
\[
P\!\left(\sum_jY_j\geq M_k(t)\mid\mathcal{F}_t\right)
\leq
\exp\!\left(
-\frac{(M_k(t)-\Gamma_k(t;1))^2}{2w_{\max}^2N_{\mathrm{active}}}
\right).
\]
When $M_k(t)\leq \Gamma_k(t;1)$, the same bound with the positive part is vacuous but valid.

For the synthetic unseen challenger $\bot$, define
\[
\widetilde Y_j =
2w_j\mathbf{1}\{a_j(t)=L,\ a_j(T)\neq a_j(t)\}
+w_j\mathbf{1}\{a_j(t)\neq L,\ a_j(T)\neq a_j(t)\},
\]
and set $M_{\bot}(T)=M_{\bot}(t)-\sum_j\widetilde Y_j$ with $M_{\bot}(t)=V_L(t)$. This synthetic endpoint margin is pessimistic for every answer string $b$ absent at checkpoint $t$: the actual decrease in the leader-versus-$b$ margin is at most $\sum_j\widetilde Y_j$. Hence, if any previously unseen answer overtakes the leader, then $M_{\bot}(T)\leq 0$.

The variables $\widetilde Y_j$ are conditionally independent, have range width at most $2w_{\max}$, and are almost surely zero when $q_j(t)=0$. They satisfy
\[
\mathbb{E}[\widetilde Y_j\mid\mathcal{F}_t]
=
\begin{cases}
2q_j(t)w_j, & a_j(t)=L,\\
q_j(t)w_j, & a_j(t)\neq L,
\end{cases}
=q_j(t)c_j^{\bot}.
\]
The same Hoeffding argument therefore applies to $\bot$ and gives Eq.~\ref{eq:safety}.
\end{proof}

\subsection{Proof of Corollary~\ref{cor:stopping}}

\begin{proof}
At checkpoint $t$, the augmented challenger set $\mathcal{K}(t)$ contains at most $N$ challengers: at most $N-1$ observed nonleader answers, plus the synthetic unseen challenger $\bot$. Eq.~\ref{eq:correction} makes the failure probability for each checked challenger at most $\delta/N$ by Theorem~\ref{thm:safety}. A union bound over $\mathcal{K}(t)$ gives
\[
P\bigl(L(T)\neq L(t)\mid\mathcal{F}_t\bigr)\leq\delta .
\]
The unseen challenger covers all answer labels absent at checkpoint $t$, so the union bound includes both observed and newly appearing challengers.

For the stopping-time statement, let the finite set of checkpoints be $\{t_1,\ldots,t_P\}$ and let $E_i=\{\tau=t_i\}$. Since $E_i\in\mathcal{F}_{t_i}$ and the conditional failure probability is at most $\delta$ whenever the rule stops,
\[
P(L(T)\neq L(\tau))
=
\sum_{i=1}^P
\mathbb{E}\!\left[
\mathbf{1}_{E_i}
P\!\left(L(T)\neq L(t_i)\mid\mathcal{F}_{t_i}\right)
\right]
\leq
\sum_{i=1}^P \delta P(E_i)
\leq \delta .
\]
\end{proof}

\section{From optimal stopping to MARS: derivation and modeling choices}
\label{app:optimality}

This appendix develops the full derivation connecting \method to the Bayes-optimal stopping rule, showing that \method is a strict conservative relaxation.

\subsection{From optimal stopping to margin certification}

The Bayes-optimal early-stopping rule halts at the earliest checkpoint $t$ satisfying
\[
P\bigl(L(T)\neq L(t)\mid \mathcal{F}_t\bigr)\leq \delta.
\]
This is intractable: it requires the joint future distribution of all trace answers. We reduce it to a per-challenger margin condition. Since the leader can only be overturned if some challenger $k$ achieves $M_k(T)\leq 0$, a union bound gives
\[
P(L(T)\neq L(t)\mid\mathcal{F}_t) \leq \sum_{k\in\mathcal{K}(t)} P(M_k(T)\leq 0\mid\mathcal{F}_t).
\]
It therefore suffices to certify $P(M_k(T)\leq 0\mid\mathcal{F}_t)\leq\delta/|\mathcal{K}(t)|$ for each challenger. The problem becomes: bound the probability that the margin against each challenger drops to zero.

\subsection{Margin decomposition}

The full-budget margin decomposes exhaustively (Proposition~\ref{thm:decomp}):
\[
M_k(T) = R_k(t) + \Phi_k(t),
\]
where $R_k$ is the retained margin from traces keeping their answer, and $\Phi_k$ is the switch flow from traces that change. Using switch indicators $X_j = \mathbf{1}\{a_j(T)\neq a_j(t)\}$:
\[
R_k = M_k(t) - \sum_{\substack{j\in\mathcal{A}_t:\\a_j(t)=L}} X_j\, w_j + \sum_{\substack{j\in\mathcal{A}_t:\\a_j(t)=k}} X_j\, w_j.
\]
This is the current margin $M_k(t)$, minus leader-voters who depart, plus challenger-voters who depart.

\subsection{Retained margin: learnable via logistic regression}

The retained term $R_k$ depends only on \emph{whether} each trace switches ($X_j$). The conditional switch probability $q_j(t) = P(X_j=1\mid\mathcal{F}_t)$ is a per-trace quantity depending on observable features: checkpoint position, probe confidence, answer-flip count, stability streak. These are all $\mathcal{F}_t$-measurable, making $q_j$ a standard binary prediction problem. A lightweight logistic regression trained on warmup traces estimates $q_j$ well (Appendix~\ref{app:qmodel}). This component of the margin can therefore be modeled tightly.

\subsection{Switch flow: unpredictable, therefore adversarial}

The switch-flow term $\Phi_k$ depends on \emph{where} switching traces land. The destination is an open-ended answer string determined by future reasoning, which is not learnable from $\mathcal{F}_t$ alone. We therefore bound $\Phi_k$ adversarially. In the worst case, every switcher leaving the leader or a neutral position joins challenger $k$:
\[
\Phi_k \geq -\sum_{\substack{j\in\mathcal{A}_t:\\a_j(t)=L}} X_j\, w_j - \sum_{\substack{j\in\mathcal{A}_t:\\a_j(t)\notin\{L,k\}}} X_j\, w_j.
\]
Summing $R_k + \Phi_k$ and collecting terms by current vote:
\[
M_k(T) \geq M_k(t) - \sum_{j\in\mathcal{A}_t} X_j \cdot c_j^k,
\]
where $c_j^k$ is the adversarial switch cost (Definition~\ref{def:cost}). This is a one-sided bound: the true $M_k(T)$ is at least as large as this expression.

\subsection{Conservative relaxation: formal statement}

Applying Hoeffding's inequality to $\sum X_j c_j^k$ (Theorem~\ref{thm:safety}) and a union bound over challengers (Corollary~\ref{cor:stopping}) yields the stopping rule: $M_k(t)\geq\Gamma_k(t)+\epsilon(N,\delta)$ for all $k$. This is a \emph{sufficient} condition for $P(L(T)\neq L(t)\mid\mathcal{F}_t)\leq\delta$.

\begin{Proposition}[Conservative relaxation]\label{prop:conservative}
Let $\tau^*$ be the Bayes-optimal stopping time and $\tau_{\mathrm{MARS}}$ the stopping time of the certified \method rule (Eq.~\ref{eq:stopping} with $\gamma=1$, true $q_j$). Under Assumptions~\ref{ass:bounded}--\ref{ass:indep}, $\tau_{\mathrm{MARS}}\geq\tau^*$ almost surely.
\end{Proposition}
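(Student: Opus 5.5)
The argument is a set-inclusion between stopping regions, passed through the definition of each stopping time as a first entrance time over the common finite checkpoint grid $\{t_1,\ldots,t_P\}$. Write
\[
S^*=\bigl\{t:\ P(L(T)\neq L(t)\mid\mathcal{F}_t)\leq\delta\bigr\}
\]
for the Bayes stopping region. Write $S_{\mathrm{MARS}}$ for the set of checkpoints $t$ at which Eq.~\ref{eq:stopping} holds for every $k\in\mathcal{K}(t)$, with $\gamma=1$, the true $q_j(t)$, and $\epsilon(N,\delta)$ as in Eq.~\ref{eq:correction}. Then
\[
\tau^*=\min\bigl(S^*\cup\{T\}\bigr),\qquad \tau_{\mathrm{MARS}}=\min\bigl(S_{\mathrm{MARS}}\cup\{T\}\bigr),
\]
and both default to $T=t_P$ if the criterion never fires, matching the fall-through in Algorithm~\ref{alg:stopping}. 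It therefore suffices to show $S_{\mathrm{MARS}}(\omega)\subseteq S^*(\omega)$ pathwise. The minimum of a subset of a finite ordered set is at least the minimum of the superset, which gives $\tau_{\mathrm{MARS}}\geq\tau^*$ almost surely.

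The inclusion is exactly Corollary~\ref{cor:stopping}. Fix $\omega$ and a checkpoint $t$ in $S_{\mathrm{MARS}}(\omega)$. Under Assumptions~\ref{ass:bounded}--\ref{ass:indep}, the corollary gives $P(L(T)\neq L(t)\mid\mathcal{F}_t)\leq\delta$ at that $t$, so $t\in S^*(\omega)$.

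The one point needing care is that conditional probabilities are defined only up to null sets. Each event $\{t\in S_{\mathrm{MARS}}\}$ is $\mathcal{F}_t$-measurable, since $M_k(t)$, $\Gamma_k(t)$ and $\mathcal{K}(t)$ all are. Corollary~\ref{cor:stopping} is a statement holding almost surely on this event. Fix a version of each conditional probability. For each of the finitely many $t_i$ there is a null set $\mathcal{N}_i$ off which the implication holds. Off $\bigcup_i\mathcal{N}_i$ the inclusion holds at every checkpoint, and a finite union of null sets is null, which yields the "almost surely" qualifier.

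The main obstacle is interpretive rather than technical: what "Bayes-optimal stopping time" means. The plan is to fix $\tau^*$ as the first-passage time of the $\delta$-constraint, as in Remark~\ref{rem:optimal} and the opening of Appendix~\ref{app:optimality}, and to prove the proposition for that definition. One could instead read $\tau^*$ as the earliest stopping time satisfying the unconditional constraint $P(L(\tau)\neq L(T))\leq\delta$. Under that reading the comparison fails in general, because the unconditional budget can be spent unevenly across paths, so I would state the definition explicitly.

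The word "strictly" in the remark is not part of the proposition. Strictness can fail: for example, when $N_{\mathrm{active}}=0$ both criteria can fire at the same checkpoint. So only the weak inequality will be proved.
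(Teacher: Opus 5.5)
Your proposal is correct and is essentially the paper's argument: Corollary~\ref{cor:stopping} places every checkpoint where the certified rule fires inside the Bayes region $\{t: P(L(T)\neq L(t)\mid\mathcal{F}_t)\leq\delta\}$, and since both rules stop at the first qualifying checkpoint, $\tau_{\mathrm{MARS}}\geq\tau^*$. Your explicit fallback to $T$ and your null-set bookkeeping are refinements the paper leaves implicit. The only slip is in your aside: $N_{\mathrm{active}}=\max\{1,|\mathcal{A}_t|\}$ can never be $0$, so you mean $|\mathcal{A}_t|=0$.
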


\begin{proof}
By Corollary~\ref{cor:stopping}, whenever Eq.~\ref{eq:stopping} holds for all $k\in\mathcal{K}(t)$, we have $P(L(T)\neq L(t)\mid\mathcal{F}_t)\leq\delta$. The checkpoints where \method fires are therefore a subset of those where the Bayes-optimal condition holds. Since both rules stop at the earliest qualifying checkpoint, $\tau_{\mathrm{MARS}}\geq\tau^*$.
\end{proof}

\section{Switch probability model details}
\label{app:qmodel}

This appendix details the logistic regression model used to estimate per-trace switch probabilities $q_j(t) = P(a_j(T) \neq a_j(t) \mid \mathcal{F}_t)$.

\subsection{Features}

The model uses 5 features, all $\mathcal{F}_t$-measurable (observable at checkpoint $t$ without future information):

\begin{table}[h]
\centering
\small
\begin{tabular}{clll}
\toprule
\# & Feature & Type & Description \\
\midrule
1 & \texttt{position} & int & Absolute probe position in tokens (e.g., 2048, 4096, \ldots) \\
2 & \texttt{confidence} & float & Probe confidence score at position $t$ \\
3 & \texttt{flips} & int & Cumulative count of answer changes across probes $[0, \ldots, t]$ \\
4 & \texttt{streak} & int & Consecutive same-answer count ending at position $t$ (min 1) \\
5 & \texttt{conf\_trend} & float & $\texttt{confidence}(t) - \texttt{confidence}(t-1)$; 0 at first position \\
\bottomrule
\end{tabular}
\caption{Features for the switch probability model. All features are trace-intrinsic: they describe the trace's own history, not its relationship to the vote distribution.}
\label{tab:features}
\end{table}

\paragraph{Feature intuition.}
\texttt{position} captures the baseline switch rate, which decreases as traces approach completion.
\texttt{confidence} reflects the model's certainty in its current answer; prior work shows that model confidence and expressed uncertainty can carry information about answer reliability~\citep{kadavath2022language,lin2022teaching}, and in our data high-confidence probes switch less often.
\texttt{flips} measures cumulative instability; traces that have changed answers many times are more likely to change again.
\texttt{streak} measures recent stability; a trace holding the same answer for many consecutive probes is unlikely to switch.
\texttt{conf\_trend} captures momentum: rising confidence suggests convergence, falling confidence suggests the trace may switch.

\subsection{Training}

\paragraph{Data.} For each question, the model is trained on 16 warmup traces that are run to completion, providing both intermediate answers at each checkpoint and the final answer $a_j(T)$. Each (trace, position) pair yields one training example, giving approximately $16 \times 27 \approx 430$ examples per question. The binary label is $y_{j,t} = \mathbf{1}\{a_j(t) \neq a_j(T)\}$.

\paragraph{Standardization.} Features are standardized to zero mean and unit variance before fitting. The standardization parameters (mean, std per feature) are stored and applied at prediction time.

\paragraph{Logistic regression.} We fit:
\[
P(y = 1 \mid \mathbf{x}) = \sigma(\beta_0 + \boldsymbol{\beta}^\top \mathbf{x}_{\text{std}})
\]
where $\sigma(z) = 1/(1 + e^{-z})$ and $\mathbf{x}_{\text{std}}$ is the standardized feature vector. The parameters $\beta_0 \in \mathbb{R}$ and $\boldsymbol{\beta} \in \mathbb{R}^{5}$ (one coefficient per feature) are fit by minimizing the regularized negative log-likelihood:
\[
\mathcal{L}(\beta_0, \boldsymbol{\beta}) = -\sum_{i} \bigl[y_i \log \hat{p}_i + (1 - y_i) \log(1 - \hat{p}_i)\bigr] + \lambda \|\boldsymbol{\beta}\|_2^2
\]
with $\lambda = 0.01$ (mild L2 regularization on coefficients, not intercept). Optimization uses L-BFGS-B with a maximum of 200 iterations, initialized at $\boldsymbol{0}$.

\subsection{Platt calibration}

After fitting the logistic model, we apply Platt scaling \citep{platt1999probabilistic} to correct any systematic miscalibration. Given raw predictions $\hat{q}_j$ on the training set:
\[
q_j^{\text{cal}} = \sigma\!\left(a \cdot \text{logit}(\hat{q}_j) + b\right)
\]
where $\text{logit}(p) = \log(p / (1-p))$ and $(a, b)$ are fit by minimizing cross-entropy on the training predictions versus true labels. With a well-specified logistic model, this is approximately the identity ($a \approx 1, b \approx 0$).

\paragraph{Why Platt calibration.} The logistic model is trained with L2 regularization, which shrinks predictions toward 0.5. Platt scaling can undo this shrinkage. More importantly, if the 5 features are insufficient to capture all switch-relevant variation (which they are---they miss vote-context information), the raw predictions may be systematically miscalibrated. Platt scaling provides a lightweight correction.

\subsection{Prediction}

At prediction time, the fitted model produces $q_j(t)$ for every trace $j$ at every checkpoint $t$. The predictions are precomputed as a $[N_{\text{traces}} \times P]$ matrix and indexed into bootstrap samples. No per-iteration computation is needed---the model is fit once per question during the warmup phase.

\paragraph{Computational cost.} Fitting the model (L-BFGS on $\sim$430 examples with 6 parameters) takes $<$1ms per question. Prediction (matrix multiply + sigmoid) for all traces and positions takes $<$1ms. The $q$-model adds negligible overhead to the simulation.

\section{Trace generation and answer probing}
\label{app:trace-generation}

\paragraph{Trace generation.}
We generate traces using SGLang~\citep{sglang}. Table~\ref{tab:inference-params} lists the inference parameters for each model. We sample 4{,}096 traces per problem, all launched in parallel.

\begin{table}[h]
\centering
\caption{Inference parameters for trace generation.}
\label{tab:inference-params}
\begin{tabular}{lcccc}
\toprule
Model & Temperature & Top-$p$ & Top-$k$ & Max Seq.\ Length \\
\midrule
DeepSeek-R1-Distill-Qwen-8B & 0.6 & 0.95 & --- & 64k \\
Qwen3-32B & 0.6 & 0.95 & 20 & 32k \\
Qwen3-Next & 0.6 & 0.95 & 20 & 82k \\
\bottomrule
\end{tabular}
\end{table}

\paragraph{Answer probing at checkpoints.}
Checkpoints are spaced every 2{,}048 tokens. At checkpoint~$t$, we truncate the reasoning trace at position~$t$ and append:

\begin{quote}
\texttt{Considering the limited time by the user, I have to give the solution based on the thinking directly now.\textbackslash n</think>\textbackslash n\textbackslash boxed\{}
\end{quote}

\noindent This closes the thinking block and forces the model to commit to an answer. We allow up to 50 new tokens for this completion; temperature, top-$p$, and top-$k$ stay the same as in trace generation. The answer is parsed from the resulting \verb|\boxed{}| output.

\section{Full experimental results}
\label{app:full_results}

Tables~\ref{tab:full_ds8b},~\ref{tab:full_q32b}, and~\ref{tab:full_qnext} report the complete results for all methods across all model--dataset combinations (3 models $\times$ 3 datasets). Each table shows \method at $\gamma = 1$ (fully conservative, no calibration) and with per-question $\gamma$-calibration, for both the learned $q$-model and oracle $q$, on SC and DCO. Savings are mean per-question token reductions relative to each family's own baseline (SC or DCO).

\begin{table}[h]
\centering
\footnotesize
\setlength{\tabcolsep}{3.5pt}
\caption{Full results for DeepSeek-R1-Distill-Qwen-8B.}
\label{tab:full_ds8b}
\begin{tabular}{llccc}
\toprule
& Method & Accuracy & Mean Tokens & Savings \\
\midrule
    \multirow{10}{*}{\rotatebox{90}{\scriptsize BRUMO '25}} & SC (baseline) & 93.2\% & 11{,}795{,}919 & --- \\
     & \method ($\gamma{=}1$) & 93.2\% & 9{,}246{,}472 & 28.8\% \\
     & \method & 93.2\% & 8{,}809{,}642 & 33.7\% \\
     & \method (oracle $q$, $\gamma{=}1$) & 93.2\% & 8{,}856{,}451 & 33.6\% \\
     & \method (oracle $q$) & 93.2\% & 8{,}419{,}773 & 37.9\% \\
\cmidrule{2-5}
     & DCO (baseline) & 93.3\% & 4{,}908{,}579 & --- \\
     & DCO + \method ($\gamma{=}1$) & 93.3\% & 4{,}462{,}553 & 13.7\% \\
     & DCO + \method & 93.3\% & 4{,}366{,}515 & 17.1\% \\
     & DCO + \method (oracle $q$, $\gamma{=}1$) & 93.3\% & 4{,}323{,}560 & 18.2\% \\
     & DCO + \method (oracle $q$) & 93.3\% & 4{,}287{,}402 & 19.3\% \\
\midrule
    \multirow{10}{*}{\rotatebox{90}{\scriptsize AIME '25}} & SC (baseline) & 83.3\% & 13{,}210{,}697 & --- \\
     & \method ($\gamma{=}1$) & 83.3\% & 10{,}464{,}233 & 31.1\% \\
     & \method & 83.3\% & 9{,}779{,}804 & 36.3\% \\
     & \method (oracle $q$, $\gamma{=}1$) & 83.3\% & 10{,}072{,}036 & 34.3\% \\
     & \method (oracle $q$) & 83.3\% & 9{,}444{,}399 & 38.7\% \\
\cmidrule{2-5}
     & DCO (baseline) & 88.6\% & 5{,}782{,}386 & --- \\
     & DCO + \method ($\gamma{=}1$) & 88.8\% & 5{,}256{,}042 & 15.8\% \\
     & DCO + \method & 88.6\% & 5{,}136{,}098 & 18.2\% \\
     & DCO + \method (oracle $q$, $\gamma{=}1$) & 88.7\% & 5{,}144{,}903 & 18.4\% \\
     & DCO + \method (oracle $q$) & 88.6\% & 5{,}045{,}326 & 20.5\% \\
\midrule
    \multirow{10}{*}{\rotatebox{90}{\scriptsize HMMT}} & SC (baseline) & 70.0\% & 14{,}882{,}765 & --- \\
     & \method ($\gamma{=}1$) & 70.0\% & 12{,}491{,}934 & 22.7\% \\
     & \method & 70.0\% & 11{,}404{,}709 & 29.3\% \\
     & \method (oracle $q$, $\gamma{=}1$) & 70.0\% & 12{,}139{,}649 & 25.7\% \\
     & \method (oracle $q$) & 70.0\% & 11{,}205{,}835 & 31.5\% \\
\cmidrule{2-5}
     & DCO (baseline) & 78.7\% & 5{,}956{,}553 & --- \\
     & DCO + \method ($\gamma{=}1$) & 78.2\% & 5{,}442{,}712 & 11.7\% \\
     & DCO + \method & 78.6\% & 5{,}367{,}318 & 13.8\% \\
     & DCO + \method (oracle $q$, $\gamma{=}1$) & 78.6\% & 5{,}354{,}571 & 13.9\% \\
     & DCO + \method (oracle $q$) & 78.6\% & 5{,}327{,}150 & 14.7\% \\
\bottomrule
\end{tabular}
\end{table}

\begin{table}[h]
\centering
\footnotesize
\setlength{\tabcolsep}{3.5pt}
\caption{Full results for Qwen3-32B.}
\label{tab:full_q32b}
\begin{tabular}{llccc}
\toprule
& Method & Accuracy & Mean Tokens & Savings \\
\midrule
    \multirow{10}{*}{\rotatebox{90}{\scriptsize BRUMO '25}} & SC (baseline) & 90.9\% & 7{,}138{,}843 & --- \\
     & \method ($\gamma{=}1$) & 90.6\% & 5{,}437{,}125 & 31.0\% \\
     & \method & 90.3\% & 5{,}013{,}437 & 36.2\% \\
     & \method (oracle $q$, $\gamma{=}1$) & 90.9\% & 5{,}292{,}775 & 33.6\% \\
     & \method (oracle $q$) & 90.9\% & 4{,}979{,}397 & 37.1\% \\
\cmidrule{2-5}
     & DCO (baseline) & 92.4\% & 4{,}627{,}106 & --- \\
     & DCO + \method ($\gamma{=}1$) & 92.9\% & 3{,}894{,}126 & 21.2\% \\
     & DCO + \method & 93.0\% & 3{,}801{,}181 & 23.2\% \\
     & DCO + \method (oracle $q$, $\gamma{=}1$) & 92.5\% & 3{,}854{,}197 & 22.2\% \\
     & DCO + \method (oracle $q$) & 92.5\% & 3{,}749{,}830 & 24.7\% \\
\midrule
    \multirow{10}{*}{\rotatebox{90}{\scriptsize AIME '25}} & SC (baseline) & 80.1\% & 7{,}739{,}219 & --- \\
     & \method ($\gamma{=}1$) & 80.1\% & 5{,}674{,}397 & 31.2\% \\
     & \method & 80.1\% & 5{,}260{,}623 & 37.1\% \\
     & \method (oracle $q$, $\gamma{=}1$) & 80.1\% & 5{,}462{,}107 & 34.6\% \\
     & \method (oracle $q$) & 80.1\% & 5{,}069{,}577 & 40.1\% \\
\cmidrule{2-5}
     & DCO (baseline) & 79.8\% & 4{,}832{,}386 & --- \\
     & DCO + \method ($\gamma{=}1$) & 80.6\% & 3{,}834{,}011 & 21.4\% \\
     & DCO + \method & 80.6\% & 3{,}661{,}690 & 26.0\% \\
     & DCO + \method (oracle $q$, $\gamma{=}1$) & 80.2\% & 3{,}702{,}460 & 24.7\% \\
     & DCO + \method (oracle $q$) & 80.1\% & 3{,}513{,}299 & 29.4\% \\
\midrule
    \multirow{10}{*}{\rotatebox{90}{\scriptsize HMMT}} & SC (baseline) & 62.8\% & 8{,}943{,}718 & --- \\
     & \method ($\gamma{=}1$) & 62.8\% & 7{,}637{,}955 & 20.6\% \\
     & \method & 62.8\% & 7{,}293{,}460 & 24.7\% \\
     & \method (oracle $q$, $\gamma{=}1$) & 62.8\% & 7{,}525{,}361 & 22.3\% \\
     & \method (oracle $q$) & 62.8\% & 7{,}176{,}939 & 26.4\% \\
\cmidrule{2-5}
     & DCO (baseline) & 65.1\% & 5{,}636{,}835 & --- \\
     & DCO + \method ($\gamma{=}1$) & 65.5\% & 5{,}158{,}842 & 12.7\% \\
     & DCO + \method & 65.5\% & 5{,}030{,}029 & 15.7\% \\
     & DCO + \method (oracle $q$, $\gamma{=}1$) & 65.5\% & 5{,}125{,}956 & 13.6\% \\
     & DCO + \method (oracle $q$) & 65.7\% & 4{,}967{,}311 & 17.0\% \\
\bottomrule
\end{tabular}
\end{table}

\begin{table}[h]
\centering
\footnotesize
\setlength{\tabcolsep}{3.5pt}
\caption{Full results for Qwen3-next.}
\label{tab:full_qnext}
\begin{tabular}{llccc}
\toprule
& Method & Accuracy & Mean Tokens & Savings \\
\midrule
    \multirow{10}{*}{\rotatebox{90}{\scriptsize BRUMO '25}} & SC (baseline) & 96.7\% & 11{,}116{,}671 & --- \\
     & \method ($\gamma{=}1$) & 96.7\% & 6{,}938{,}809 & 42.9\% \\
     & \method & 96.7\% & 6{,}016{,}275 & 47.4\% \\
     & \method (oracle $q$, $\gamma{=}1$) & 96.7\% & 6{,}145{,}701 & 47.5\% \\
     & \method (oracle $q$) & 96.7\% & 5{,}701{,}379 & 50.2\% \\
\cmidrule{2-5}
     & DCO (baseline) & 95.4\% & 4{,}665{,}047 & --- \\
     & DCO + \method ($\gamma{=}1$) & 95.4\% & 3{,}593{,}620 & 24.7\% \\
     & DCO + \method & 95.5\% & 3{,}072{,}580 & 29.4\% \\
     & DCO + \method (oracle $q$, $\gamma{=}1$) & 95.5\% & 3{,}240{,}662 & 28.2\% \\
     & DCO + \method (oracle $q$) & 95.5\% & 2{,}996{,}557 & 30.9\% \\
\midrule
    \multirow{10}{*}{\rotatebox{90}{\scriptsize AIME '25}} & SC (baseline) & 86.9\% & 11{,}757{,}443 & --- \\
     & \method ($\gamma{=}1$) & 87.2\% & 8{,}662{,}693 & 35.1\% \\
     & \method & 87.7\% & 7{,}899{,}797 & 41.4\% \\
     & \method (oracle $q$, $\gamma{=}1$) & 86.9\% & 8{,}257{,}982 & 39.4\% \\
     & \method (oracle $q$) & 87.0\% & 7{,}559{,}786 & 44.4\% \\
\cmidrule{2-5}
     & DCO (baseline) & 90.0\% & 4{,}859{,}163 & --- \\
     & DCO + \method ($\gamma{=}1$) & 90.0\% & 4{,}188{,}131 & 16.6\% \\
     & DCO + \method & 90.0\% & 3{,}998{,}876 & 21.2\% \\
     & DCO + \method (oracle $q$, $\gamma{=}1$) & 90.0\% & 4{,}053{,}656 & 20.5\% \\
     & DCO + \method (oracle $q$) & 90.0\% & 3{,}906{,}587 & 23.4\% \\
\midrule
    \multirow{10}{*}{\rotatebox{90}{\scriptsize HMMT}} & SC (baseline) & 86.9\% & 15{,}432{,}157 & --- \\
     & \method ($\gamma{=}1$) & 86.9\% & 12{,}884{,}039 & 26.7\% \\
     & \method & 86.9\% & 12{,}280{,}735 & 31.4\% \\
     & \method (oracle $q$, $\gamma{=}1$) & 86.9\% & 12{,}401{,}908 & 30.9\% \\
     & \method (oracle $q$) & 86.9\% & 11{,}842{,}803 & 34.4\% \\
\cmidrule{2-5}
     & DCO (baseline) & 82.2\% & 6{,}960{,}432 & --- \\
     & DCO + \method ($\gamma{=}1$) & 82.2\% & 6{,}428{,}595 & 12.9\% \\
     & DCO + \method & 82.0\% & 6{,}290{,}936 & 16.1\% \\
     & DCO + \method (oracle $q$, $\gamma{=}1$) & 82.2\% & 6{,}283{,}393 & 16.5\% \\
     & DCO + \method (oracle $q$) & 82.2\% & 6{,}203{,}829 & 17.8\% \\
\bottomrule
\end{tabular}
\end{table}

\section{Limitations \& Broader impacts}
\label{app:limitations}
\paragraph{Limitations.}

The margin decomposition assumes constant per-trace weights $w_j$, which holds exactly for uniform-weight schemes (self-consistency) but only approximately for methods with position-dependent weighting such as DCO. In practice, this approximation is benign: DCO's own filtering and truncation absorb most of the weight variation, and our method preserves DCO baseline accuracy across all 9 settings tested. Our evaluation covers mathematical reasoning; the framework's applicability to other domains (e.g., coding) remains to be validated.

\paragraph{Broader impacts.}
The main positive impact is reducing the compute cost, latency, and energy use of parallel LLM inference. The same efficiency gain could also make high-volume LLM deployment cheaper, including deployments whose outputs are low-quality or harmful. \method does not introduce a new model or new generation capability, but it should be deployed with the same output-quality and safety controls as the underlying LLM system.


\end{document}